\newtheorem{theorem}{Theorem}
\newtheorem{lemma}[theorem]{Lemma}
\theoremstyle{definition}
\newtheorem{exmp}{Example}
\def\BibTeX{{\rm B\kern-.05em{\sc i\kern-.025em b}\kern-.08em
    T\kern-.1667em\lower.7ex\hbox{E}\kern-.125emX}}
\begin{document}

\title{
 Information-Theoretic Regret Bounds \\ for Bandits with Fixed Expert Advice
}

\author{\IEEEauthorblockN{Khaled Eldowa\IEEEauthorrefmark{1}, Nicolò Cesa-Bianchi\IEEEauthorrefmark{1}, Alberto Maria Metelli\IEEEauthorrefmark{2}, Marcello Restelli\IEEEauthorrefmark{2}}
\IEEEauthorblockA{\IEEEauthorrefmark{1}Università degli Studi di Milano, Milan, Italy}
\IEEEauthorblockA{\IEEEauthorrefmark{2}Politecnico di Milano, Milan, Italy}
\IEEEauthorblockA{\{khaled.eldowa, nicolo.cesa-bianchi\}@unimi.it, \{albertomaria.metelli, marcello.restelli\}@polimi.it}
}

\maketitle

\begin{abstract}
We investigate the problem of bandits with expert advice when the experts are fixed and known distributions over the actions. Improving on previous analyses, we show that the regret in this setting is controlled by information-theoretic quantities that measure the similarity between experts. In some natural special cases, this allows us to obtain the first regret bound for EXP4 that
can get arbitrarily close to zero if the experts are similar enough.
While for a different algorithm, we provide another bound that describes the similarity between the experts in terms of the KL-divergence, and we show that this bound can be smaller than the one of EXP4 in some cases.  
Additionally, we provide lower bounds for certain classes of experts showing that the algorithms we analyzed are nearly optimal in some cases.
\end{abstract}

%\begin{IEEEkeywords}
%\end{IEEEkeywords}

\section{Introduction}
Bandits with expert advice (see, e.g., \cite{lattimore2020bandit}) is a well-known variant of the non-stochastic bandits problem in which, at the beginning of each round, $N$ experts each make a recommendation to the learner in the form of a distribution over the $K$ available actions. The algorithm EXP4 \cite{adversarial} solves this problem with a regret against the best expert bounded by $\sqrt{2TK\log N}$, where $T$ is the horizon. %%
When $N \gg K$, this bound shows the ability of EXP4 to leverage the structure of the problem, as opposed to running a bandit algorithm over the $N$ experts achieving a bound of $\sqrt{TN}$. An almost matching lower bound of order $\sqrt{TK\log N/\log K}$ was proved in \cite{expertslowerbound} (for deterministic experts).  %%
In this work, we study a variant of bandits with expert advice in which the distributions recommended by the experts are \emph{fixed} and \emph{known}.
In the following, we will use the term \emph{policies} to denote these fixed experts.
Our goal is to determine the best possible dependence of the regret on the structure of the policy set $\Theta$ irrespective of the assigned sequence of losses. 

This problem is closely related to linear bandits \cite{linear-bandits} (with finite decision sets), where 
the structure of the decision/policy set can be provably leveraged. %%%
Our problem can also be viewed as a non-stochastic version of \textit{bandits with mediator feedback} \cite{metelli2021policy}, where the learner's access to actions is mediated by the fixed policy set.
When losses are stochastic rather than being adversarial,
regret bounds were proved in \cite{metelli2021policy} that scale with the largest pair-wise exponentiated 2-Rényi divergence\footnote{This divergence is related to the chi-squared divergence. Note that pairwise, these divergences can be infinite in non-trivial cases, see Example~\ref{example-flower}.} between the policies in the context of policy-based reinforcement learning. Comparable bounds were also proved in \cite{chi-squared} in the setting of contextual bandits.
In our setting, where losses are adversarially generated, the best known bound is $\sqrt{2T\mathcal{S}(\Theta)\log N}$ from \cite{McMahanS09}, where $\mathcal{S}(\Theta) \le \min\{K,N\}$ %%
is a notion describing the similarity between policies, see Section \ref{sec:S-interp} for its definition and a new information-theoretic interpretation.
Since $\mathcal{S}(\Theta) \ge 1$ for all $\Theta$, this bound cannot get arbitrarily small no matter how similar the policies are, and becomes vacuous when the policies are identical. %%%%%%%%

Our first contribution (Theorem~\ref{upper:alt-losses:thm}) is a new regret bound for EXP4 of the form $\sqrt{2T\mathcal{S}^*(\Theta)\log N}$, where $\mathcal{S}^*(\Theta)$ is a new index of similarity between policies that is never larger than $\mathcal{S}(\Theta)$ and reduces to twice the total variation distance when $N = |\Theta| = 2$. In particular, we show that $\mathcal{S}^*(\Theta)$ can indeed become arbitrarily small, depending on the policy set.
Note that such guarantees cannot be obtained only as a consequence of the reduced range of the losses caused by the similarity of the policies, see \cite{lattimore-lowerbounds}.
Additionally, we show in Theorem~\ref{cvxhull-thm} an algorithm whose regret is bounded by $\sqrt{2 T K D^*(\Theta)}$, where $D^*(\Theta)$ is a notion of the ``width'' of $\Theta$ in terms of the KL-divergence that reduces to the information radius when $\Theta$ is symmetric. This bound is never worse than the EXP4 bound $\sqrt{2TK\log N}$. Moreover, we construct sets $\Theta$ where $K D^*(\Theta) < \mathcal{S}^*(\Theta) \log N$. 
Finally, we prove lower bounds for a number of policy set structures and contrast them with the upper bounds we derived. 
We illustrate, in particular, some examples where the bounds are nearly matching.

\section{Problem Formulation}
We consider a non-stochastic multi-armed bandits problem with a finite action set $\mathcal{A} = [K]$ containing $K$ actions, and a (fixed) policy set $\Theta \in \Delta_{K-1}$ consisting of $N$ probability distributions over the actions. Here $\Delta_{K-1}$ denotes the probability simplex in $\mathbb{R}^K$ and, for a policy $\theta \in \Theta$ and $j \in [K]$, $\theta(j)$ is the probability with which policy $\theta$ picks action $j$. We additionally assume that each arm is in the support of at least one policy.
With a time horizon of $T$ rounds, an instance of the problem is characterized by an unknown sequence of loss vectors $(\ell_t)_{t=1}^T$, where $\ell_t(j) \in [0,1]$, for $j \in [K]$, denotes the loss assigned to action $j$ at round $t \in [T]$. A decision maker interacts with the environment as follows: at each round $t$, the decision maker selects a policy $\theta_t \in \Theta$; an action $A_t \in [K]$ is then sampled from $\theta_t$; the decision maker subsequently suffers the loss $\ell_t(A_t)$ and observes the pair $\big(A_t,\ell_t(A_t)\big)$. With a slight abuse of notation we denote by $\ell_t(\theta)$ the expected loss (at round $t$) given that policy $\theta$ was selected; that is, $\ell_t(\theta) = \sum_{j=1}^K \theta(j) \ell_t(j)$. The objective is to minimize the regret, which we define as follows:
\begin{equation*}
    R_T = \E \sum_{t=1}^T \ell_t(\theta_t) - \sum_{t=1}^T \ell_t(\theta^*),
\end{equation*}
where $\theta^* \in \argmin_{\theta \in \Theta} \sum_{t=1}^T \ell_t(\theta)$ and the expectation is over the internal randomization of the player.  

\section{EXP4 Regret Analysis}
It is possible to show\footnote{See also Theorem 18.3 in \cite{lattimore2020bandit}.} that the EXP4 algorithm with a suitable tuning of the learning rate satisfies a similar regret bound to the one proven for the algorithm developed in \cite{McMahanS09}. In our setting, this bound is $\sqrt{2T\mathcal{S}(\Theta)\log N}$, where
$
    \mathcal{S}(\Theta) = \sum_{j=1}^K \max_{\theta \in \Theta} \theta(j)
$
is a notion of similarity for the policy set. It is easy to see that $\mathcal{S}(\Theta) \leq \min\{K,N\}$, and thus, it can be interpreted as the effective ``number'' of policies.

\begin{algorithm} [t]
    \caption{EXP4 With Fixed Expert Advice}
    \label{exp4}
    \begin{algorithmic}[1]
        \State \textbf{Input:} $K$, $\Theta$, $\eta$
        \State \textbf{Initialize:} $\forall \theta \in \Theta$, $\hat{\ell}_0(\theta)=0$
        \For{$t=1,\dotsc,T$}
            \State Draw $\theta_t \sim P_t$, where $P_{t}(\theta) = \frac{\exp(-\eta\sum_{s=0}^{t-1}\hat{\ell}_s(\theta))}{\sum_{\xi \in \Theta}  \exp(-\eta\sum_{s=0}^{t-1}\hat{\ell}_s(\xi))}$
            \State Draw $A_t \sim \theta_t$, and observe loss $\ell_t (A_t)$
            \State $\forall \theta \in \Theta$, set $\hat{\ell}_t(\theta) = \frac{\theta(A_t)}{\sum_{\xi \in \Theta} P_{t} (\xi) \xi(A_t)} \ell_t(A_t)$
        \EndFor
    \end{algorithmic}
\end{algorithm}

\subsection{An Information-Theoretic Interpretation of \texorpdfstring{$\mathcal{S}(\Theta)$}{S(Theta)}} \label{sec:S-interp}
An alternative characterization of the policy set similarity can be derived by observing that $\mathcal{S}(\Theta) = 1 + \mathcal{TV}(\Theta)$, where for any ordering of the policies $(\theta_i)_{i=1}^N$ we define:
\begin{equation}\label{tv}
    \mathcal{TV}(\Theta) = \sum_{i=2}^N \sum_{j:\theta_i(j) > \theta_{[i-1]}(j)} (\theta_i(j)-\theta_{[i-1]}(j)),
\end{equation}
where $\theta_{[i-1]}(j) = \max_{\theta \in \{\theta_1,\cdots,\theta_{i-1}\}}\theta(j)$, and the second sum is the upper variation of the signed measure $\theta_i-\theta_{[i-1]}$.
$\mathcal{TV}(\Theta)$ can be seen as a generalization of the total variation distance to describe the overall divergence of the policy set. Indeed, it is easy to see that when $\Theta=\{\theta_1, \theta_2\}$,  $\mathcal{TV}(\Theta)$ reduces to the total variation between the two policies:
\[
\tv(\theta_1,\theta_2) = \sum_{j: \theta_2(j) > \theta_1(j)} (\theta_2(j)-\theta_1(j))~.
\]
Moreover, an upper bound on $\mathcal{TV}(\Theta)$ can be derived by noting that for any $\tau \in \Delta_{K-1}$ we have:
\begin{equation*}
    \mathcal{TV}(\Theta) = \mathcal{S}(\Theta) - 1 = \sum_{\theta \in \Theta} \sum_{j \in B(\theta)} (\theta(j) - \tau(j)),
\end{equation*}
where $(B(\theta))_{\theta \in \Theta}$ is any partition of $[K]$ such that for  $j \in B(\theta)$ we have $\theta \in \argmax_{\theta'\in \Theta} \theta'(j)$. Then, it follows that:
\begin{equation}\label{tv-bound}
         \mathcal{TV}(\Theta) \leq \min_{\tau \in \Delta_{K-1}} \sum_{\theta \in \Theta} \tv(\theta,\tau).
\end{equation}
It should be noted that this bound can be loose. For instance, if the policy set is partitioned into clusters of similar policies, then it is not hard to see that the right-hand side of \eqref{tv-bound} will be wasteful compared to \eqref{tv}. It is also noteworthy that quantities related to $\mathcal{S}(\Theta)$ are used when studying the minimax risk in statistical estimation problems. In particular, Theorem II.1 in \cite{minimax-risk} can be used to derive upper bounds similar to \eqref{tv-bound} in terms of any $f$-divergence, though often in implicit form.

\subsection{An Improved Bound}
Nevertheless, the stated regret bound scales with $\mathcal{S}(\Theta)$, not with $\mathcal{TV}(\Theta)$. The main limitation is that 
$\mathcal{S}(\Theta) \geq 1$, no matter how close $\mathcal{TV}(\Theta)$ is to zero. Thus, the bound $\sqrt{2T\mathcal{S}(\Theta)\log N}$ is never smaller than $\sqrt{2T\log N}$ regardless of the structure. One might wonder if this is necessary.
The following theorem provides the first regret bound for EXP4 that can get arbitrarily close to zero if the policies are similar enough. Our bound depends on the key quantity:
\[
        \mathcal{S}^*(\Theta) = \sum_{j=1}^K \left(\max_{\theta \in \Theta} \theta(j) - \min_{\theta' \in \Theta} \theta'(j)\right),
\]
that is easily seen to satisfy
$\mathcal{TV}(\Theta) \le \mathcal{S}^*(\Theta) \le \mathcal{S}(\Theta)$.
\begin{theorem} \label{upper:alt-losses:thm}
    Algorithm \ref{exp4} run with $\eta = \sqrt{ \frac{2\log N}{T \mathcal{S}^*(\Theta)}}$ satisfies
$
        R_T \leq \sqrt{2 T \mathcal{S}^*(\Theta) \log N}.
$
\end{theorem}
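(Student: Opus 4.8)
The plan is to run the standard exponential-weights (Hedge) analysis on a cleverly \emph{shifted} version of the loss estimates, where the shift is exactly what turns $\mathcal{S}(\Theta)$ into $\mathcal{S}^*(\Theta)$. Write $\bar\theta_t(j) = \sum_{\xi\in\Theta}P_t(\xi)\xi(j)$ for the probability that Algorithm~\ref{exp4} plays action $j$ at round $t$, so that $A_t\sim\bar\theta_t$, and set $m(j)=\min_{\theta\in\Theta}\theta(j)$, $M(j)=\max_{\theta\in\Theta}\theta(j)$, so that $\mathcal{S}^*(\Theta)=\sum_j(M(j)-m(j))$ (all $\bar\theta_t(j)>0$ since each arm lies in some policy's support and $P_t$ has full support). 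The key observation is that exponential weights is invariant to shifting all the fed losses by a common, policy-independent quantity: subtracting $b_t = \min_{\theta\in\Theta}\hat\ell_t(\theta) = \frac{m(A_t)}{\bar\theta_t(A_t)}\ell_t(A_t)$ from every $\hat\ell_t(\theta)$ leaves each $P_t$ unchanged, because the factor $\exp\!\big(-\eta\sum_{s<t}b_s\big)$ cancels in the normalization. Hence I can analyze the algorithm as if it were fed the nonnegative estimates $\tilde\ell_t(\theta) = \hat\ell_t(\theta) - b_t = \frac{\theta(A_t)-m(A_t)}{\bar\theta_t(A_t)}\ell_t(A_t)\ge 0$, even though it is implemented with $\hat\ell_t$.

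First I would apply the usual potential argument to the $\tilde\ell_t$. Using $e^{-x}\le 1-x+\tfrac{x^2}{2}$ for $x\ge0$ (valid since $\tilde\ell_t\ge0$) and $\log(1+x)\le x$ on the log-ratio of successive normalizers, and comparing against the weight of $\theta^*$, yields the standard inequality
\[
\sum_{t=1}^T\langle P_t,\tilde\ell_t\rangle - \sum_{t=1}^T\tilde\ell_t(\theta^*) \le \frac{\log N}{\eta} + \frac{\eta}{2}\sum_{t=1}^T\langle P_t,\tilde\ell_t^2\rangle,
\]
where $\langle P_t,g\rangle = \sum_{\theta}P_t(\theta)g(\theta)$. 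I would then take expectations. Writing $\E_t$ for the conditional expectation given the history up to round $t$, we have $\E_t[\hat\ell_t(\theta)] = \ell_t(\theta)$ and $\E_t[b_t] = \sum_j m(j)\ell_t(j) =: \beta_t$, which does not depend on $\theta$. Thus $\E_t[\tilde\ell_t(\theta)] = \ell_t(\theta) - \beta_t$; the common baseline $\beta_t$ appears in both $\E[\langle P_t,\tilde\ell_t\rangle]$ and $\E[\tilde\ell_t(\theta^*)]$ and therefore cancels, so the expectation of the left-hand side is exactly $R_T$.

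It remains to bound the second-moment term, and this is where $\mathcal{S}^*$ emerges. Conditioning on the past and averaging over $A_t\sim\bar\theta_t$,
\[
\E_t\big[\langle P_t,\tilde\ell_t^2\rangle\big] = \sum_{j=1}^K \frac{\ell_t(j)^2}{\bar\theta_t(j)}\sum_{\theta\in\Theta}P_t(\theta)\big(\theta(j)-m(j)\big)^2.
\]
Using $\ell_t(j)\le 1$, the inequality $(\theta(j)-m(j))^2 \le (M(j)-m(j))(\theta(j)-m(j))$, and $\sum_{\theta}P_t(\theta)(\theta(j)-m(j)) = \bar\theta_t(j)-m(j)\le\bar\theta_t(j)$, each summand is at most $M(j)-m(j)$, so $\E_t[\langle P_t,\tilde\ell_t^2\rangle]\le\mathcal{S}^*(\Theta)$. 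Plugging this back gives $R_T\le \frac{\log N}{\eta}+\frac{\eta}{2}T\mathcal{S}^*(\Theta)$, and the stated $\eta = \sqrt{2\log N/(T\mathcal{S}^*(\Theta))}$ optimizes this expression to $\sqrt{2T\mathcal{S}^*(\Theta)\log N}$.

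The main conceptual obstacle is the first step: recognizing that subtracting the policy-independent baseline $b_t$ (the smallest possible value of $\hat\ell_t(\theta)$ over $\theta$) is simultaneously \emph{free}, by shift-invariance of exponential weights, and exactly what replaces each $\theta(j)$ by $\theta(j)-m(j)$ inside the variance computation. Once this centering is in place, the factorization $(\theta(j)-m(j))^2 \le (M(j)-m(j))(\theta(j)-m(j))$ is the natural analogue of the bound $\theta(j)^2 \le M(j)\,\theta(j)$ that produces $\mathcal{S}(\Theta)$ in the unshifted analysis, and everything else is routine.
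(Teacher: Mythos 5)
Your proposal is correct and follows essentially the same route as the paper's proof: your shifted estimate $\tilde\ell_t(\theta)$ is exactly the paper's $\hat{\zeta}_t(\theta)$, the shift-invariance argument matches the paper's rewriting of $P_t$ in terms of $\hat{\zeta}_s$, and your second-moment bound via $(\theta(j)-m(j))^2 \le (M(j)-m(j))(\theta(j)-m(j))$ together with $\bar\theta_t(j)-m(j)\le\bar\theta_t(j)$ is the same chain of inequalities used there. No gaps; this is the intended argument.
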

\begin{proof}
    For a policy $\theta$, we define a modified version of the loss at time $t$ as $\zeta_t (\theta) = \sum_{j=1}^K (\theta(j) - q(j)) \ell_t(j)$, where $q(j) = \min_{\theta' \in \Theta} \theta'(j)$. Notice that for any two policies $\theta_i$ and $\theta_j$, $\zeta_t(\theta_i) - \zeta_t(\theta_j) = \ell_t(\theta_i) - \ell_t(\theta_j)$. An estimate for this modified loss can be constructed in a standard manner:  $\hat{\zeta}_t (\theta) = (\theta(A_t)-q(A_t)) \frac{\ell_t (A_t)}{\psi_t (A_t)}$, where $\psi_t(j)=\sum_{\theta' \in \Theta} P_{t} (\theta') \theta'(A_t)$ is the probability of playing arm $j$ at time $t$ given $P_t$. Let $\E_{t-1}$ be the expectation conditioned on the events up to round $t-1$, it can be verified\footnote{See the proof of Theorem 4.2 in \cite{bubeck2012regret} for similar arguments.} that (for any $\theta$) $\mathbb{E}_{t-1}\hat{\zeta}_t (\theta)
= \zeta_t (\theta)$ and  $\mathbb{E}_{t-1}\zeta_t (\theta_t)
= \mathbb{E}_{t-1}\sum_{\theta \in \Theta}  P_t(\theta) \hat{\zeta}_t (\theta)$. Hence, we have that
\begin{align}
    R_T &= \E\left[ \sum\nolimits_t \zeta_t (\theta_t) -  \sum\nolimits_t \zeta_t (\theta^*) \right] \nonumber\\ \label{upper:alt-losses:regret}
        &= \E\left[ \sum\nolimits_t \sum\nolimits_{\theta \in \Theta}  P_t(\theta) \hat{\zeta}_t (\theta) -  \sum\nolimits_t \hat{\zeta}_t (\theta^*)\right].
\end{align}
Notice that:
    $
        \sum_{s=1}^{t-1} \hat{\ell}_s (\theta) = \sum_{s=1}^{t-1} \hat{\zeta}_s (\theta) + \sum_{s=1}^{t-1} q(A_s) \frac{\ell_s (A_s)}{\psi_s (A_s)}.
    $
    And since the second sum does not depend on $\theta$, we can rewrite\footnote{Like $\hat{\ell}_0(\theta)$, we initialize $\hat{\zeta}_0(\theta)$ as zero.} $P_t$ as follows:
    \begin{equation*}
        P_{t}(\theta) = \frac{\exp(-\eta\sum_{s=0}^{t-1} \hat{\zeta}_s (\theta))}{\sum_{\theta' \in \Theta}  \exp(-\eta\sum_{s=0}^{t-1} \hat{\zeta}_s (\theta'))}.
    \end{equation*}
    One can then bound (\ref{upper:alt-losses:regret}) using a standard manipulation (see, for example, Theorem 1.5 in \cite{hazan}) to yield that:\footnote{This requires that $\hat{\zeta}_t(\theta)$ is non-negative, which is indeed the case. 
    %Moreover, the $1/2$ factor in front of the second moment term is achieved by bounding $e^{-\eta \hat{\zeta}_t(\theta)}$ by $1-\eta \hat{\zeta}_t(\theta)+\frac{\eta^2}{2} \hat{\zeta}^2_t(\theta)$ instead of $1-\eta \hat{\zeta}_t(\theta)+\eta^2 \hat{\zeta}^2_t(\theta)$ as in the proof of Theorem 1.5 in \cite{hazan}.
    }
    \begin{equation} \label{upper:alt-losses:hedge}
        R_T \leq  \frac{\eta}{2} \E \sum\nolimits_t \sum\nolimits_{\theta \in \Theta}  P_t(\theta) \hat{\zeta}^2_t (\theta) + \frac{\log N}{\eta}.
    \end{equation}
    Define $s(j) = \max_{\theta \in \Theta} \theta(j)$, we now bound the second moment term at step $t$:
     \begin{align*}
         &\mathbb{E}_{t-1}\sum\nolimits_{\theta\in\Theta}  P_t(\theta) \hat{\zeta}^2_t (\theta)\\ &= \mathbb{E}_{t-1}\sum\nolimits_{\theta\in\Theta} P_t(\theta) (\theta(A_t)-q(A_t))^2 \frac{\ell^2_t (A_t)}{\psi^2_t (A_t)}  \\
         &\leq \mathbb{E}_{t-1}(s(A_t)-q(A_t)) \frac{\ell^2_t (A_t)}{\psi_t (A_t)} \frac{\sum_{\theta\in\Theta} P_t(\theta) (\theta(A_t)-q(A_t)) }{\sum_{\theta\in\Theta} P_t(\theta) \theta(A_t)}  \\
         &\leq \mathbb{E}_{t-1}(s(A_t)-q(A_t)) \frac{\ell^2_t (A_t)}{\psi_t (A_t)}  \\
         & = \sum\nolimits_j (s(j)-q(j)) \ell^2_t (j) \leq \mathcal{S}^*(\Theta).
     \end{align*}
     Combining this with \eqref{upper:alt-losses:hedge} and the specified choice of $\eta$ concludes the proof.
\end{proof}
In general, $\mathcal{S}^*(\Theta)$ is not guaranteed to be strictly smaller $\mathcal{S}(\Theta)$; they can be equal in some cases regardless of how small $\mathcal{TV}(\Theta)$ is. In other cases, however, there can be an improvement. To see this, note that for any $\tau \in \Delta_{K-1}$,
\begin{equation*}
    \mathcal{S}^*(\Theta) = \sum_{\theta \in \Theta} \sum_{j \in B(\theta)} (\theta(j) - \tau(j)) + \sum_{j \in B'(\theta)} (\tau(j)-\theta(j)),
\end{equation*} 
where $(B'(\theta))_{\theta \in \Theta}$ is a partition of $[K]$ such that for every $j \in B'(\theta)$ we have $\theta \in \argmin_{\theta'\in \Theta} \theta'(j)$. Then, analogously to~\eqref{tv-bound}, we have that:
\begin{equation*} 
    \mathcal{S}^*(\Theta) \leq 2\min_{\tau \in \Delta_{K-1}} \sum_{\theta \in \Theta} \tv(\theta,\tau).
\end{equation*}
Like \eqref{tv-bound}, this bound can be loose, but it serves to indicate that if $\min_{\tau \in \Delta_{K-1}} \sum_{\theta \in \Theta} \tv(\theta,\tau)$ is small (it can get arbitrarily so), $\mathcal{S}^*(\Theta)$ is guaranteed to be of at most the same order.

\subsection{Examples}
In the following, we compare the quantities $\mathcal{S}(\Theta)$, $\mathcal{S}^*(\Theta)$, and $\mathcal{TV}(\Theta)$ for a selection of policy set structures. 
\begin{exmp}[Two Policies] \label{example-two}
    As we have seen before, for the two policies case, i.e., $\Theta = \{\theta_1,\theta_2\}$, $\mathcal{S}(\Theta) = 1 + D_{TV}(\theta_1,\theta_2)$. Whereas
$
    \mathcal{S}^*(\Theta) = \sum_{j=1}^K |\theta_1(j) - \theta_2(j)| = 2 D_{TV}(\theta_1,\theta_2).
$
\end{exmp}
The next two examples concern the case in which each policy is a uniform distribution over a support of $M\leq K$ arms. In this scenario, we get that\footnote{Recall that we assume that an arm is in the support of at least one policy.} $\mathcal{S}(\Theta) = \frac{K}{M}$, while $\mathcal{S}^*(\Theta)$ depends on the number of arms common to all policies.

\begin{exmp}[Radially Symmetric Uniform Policies] \label{example-flower}
    Consider a structure where the intersection of the supports of any pair of policies is the same.\footnote{This means that any arm is either in the support of all policies or exclusively in the support of a single one.} 
    Let $V \leq M$ be the number of arms common to all policies, we have that $\mathcal{S}(\Theta) = \frac{N(M-V)+V}{M}$, that is bounded from below by $1$, while $\mathcal{S}^*(\Theta)=N\frac{M-V}{M}$ and $\mathcal{TV}(\Theta)=(N-1)\frac{M-V}{M}$ are not.\footnote{Note that $\frac{M-V}{M}$ is the total variation distance between any two policies.}
    \end{exmp}

\begin{exmp}[Failure of $\mathcal{S}^*$] \label{example-stripes}    
    On the other hand, if $K=M+1$, and the policy set contains all possible $M$-supported uniform policies, then $\mathcal{S}(\Theta) = \frac{M+1}{M}$ which approaches $1$ as $M$ increases, thus $\mathcal{TV}(\Theta)=\frac{1}{M}$ approaches $0$. However, $\mathcal{S}^*(\Theta)$ is always equal to $\mathcal{S}(\Theta)$ since for each arm $j$, $\min_{\theta' \in \Theta} \theta'(j) = 0$. 
\end{exmp}

\begin{exmp}[$\epsilon$-Uniform Policies] \label{example-epsilon}
    If $N=K$ and each policy $\theta$ is associated (one-to-one) with an arm $a_\theta$ so that, for an arm $j$, $\theta(j) = \frac{1-\epsilon}{K} + \epsilon \mathbb{I}\{j=a_\theta\}$, where $0 \leq \epsilon \leq 1$.
    Then, $\mathcal{S}(\Theta) = \epsilon K + 1 - \epsilon$, while $\mathcal{S}^*(\Theta) = \epsilon K$ and $\mathcal{TV}(\Theta) = \epsilon (K-1)$.
\end{exmp}

\section{An Alternative Approach} \label{sec:osmd}
Since we can randomize our policy choice at each round, we can interpret our setting as a bandits problem in which the player has to randomize over the actions choosing a distribution from 
(and also competing with)
the convex hull $\mathrm{co}(\Theta)$ of the available policies. A simple approach, outlined in Algorithm~\ref{osmd}, is to adapt the Online Stochastic Mirror Descent (OSMD) interpretation of EXP3 \cite{adversarial} to our setting. The main distinction is that we need to project onto $\mathrm{co}(\Theta)$ at each step. Denote by $D(P,Q)$ the KL-divergence between distributions $P$ and $Q$, and for $\tau \in \Delta_{K-1}$, define $D(\Theta||\tau)=\max_{\theta\in\Theta} D(\theta, \tau)$. 
The following regret bound for Algorithm~\ref{cvxhull-thm} uses a notion of the ``width'' of $\Theta$ in terms of the KL-divergence defined by:\footnote{Note that the minimum value can only be attained in $\mathrm{co}(\Theta)$; see Theorem 11.6.1 in \cite{cover2006}.}
\[
    D^*(\Theta) =  \min_{\tau\in\Delta_{K-1}} D(\Theta||\tau)~.
\]
\begin{theorem} \label{cvxhull-thm}
Algorithm \ref{osmd} run with
\[
\tau^* \in \argmin_{\tau\in\Delta_{K-1}} D(\Theta||\tau) \quad\text{and}\quad \eta=\sqrt{ \frac{2D^*(\Theta)}{T K}}
\]
satisfies
$
       R_T \leq \sqrt{2TD^*(\Theta)K}
$.
\end{theorem}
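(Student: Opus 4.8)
The plan is to analyze Algorithm~\ref{osmd} as an instance of online stochastic mirror descent with the negative-entropy mirror map, following the OSMD interpretation of EXP3, but with the feasible set taken to be $\mathrm{co}(\Theta)$ and the initial iterate set to $\tau^*$. First I would recast the regret in the convex-hull view: at each round the player samples $A_t \sim x_t$ for an iterate $x_t \in \mathrm{co}(\Theta)$ (decomposing $x_t$ as a mixture over $\Theta$ and sampling a policy accordingly), so that the induced distribution over arms is $x_t$. Since $\theta^* \in \Theta \subseteq \mathrm{co}(\Theta)$, the regret can be written as $R_T = \E\sum_t \langle \ell_t, x_t - \theta^*\rangle$. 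Using the standard importance-weighted estimator $\hat{\ell}_t(j) = \ell_t(j)\,\mathbb{I}\{A_t=j\}/x_t(j)$, which is conditionally unbiased, this equals $\E\sum_t \langle \hat{\ell}_t, x_t - \theta^*\rangle$.

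Next I would invoke the mirror-descent regret decomposition. Writing $D(\cdot,\cdot)$ for the KL-divergence (the Bregman divergence of the negentropy), the two-step update of Algorithm~\ref{osmd} — the multiplicative step $\tilde{x}_{t+1}(j) \propto x_t(j)\,e^{-\eta\hat{\ell}_t(j)}$ followed by the KL-projection $x_{t+1} = \argmin_{x\in\mathrm{co}(\Theta)} D(x,\tilde{x}_{t+1})$ — yields, via the generalized Pythagorean inequality $D(\theta^*, x_{t+1}) \le D(\theta^*, \tilde{x}_{t+1})$ (valid since $\theta^* \in \mathrm{co}(\Theta)$),
\[
\sum\nolimits_t \langle \hat{\ell}_t, x_t - \theta^*\rangle \le \frac{D(\theta^*, x_1)}{\eta} + \frac{\eta}{2}\sum\nolimits_t \sum\nolimits_{j} x_t(j)\,\hat{\ell}_t(j)^2.
\]
The non-negativity of $\hat{\ell}_t$ is what licenses the per-round stability bound $\frac{\eta}{2}\sum_j x_t(j)\hat{\ell}_t(j)^2$, through the elementary inequality $e^{-z}\le 1-z+z^2/2$ valid for $z\ge0$.

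Then I would bound the two terms separately. For the penalty term, the choice $x_1 = \tau^*$ is exactly what makes it informative: $D(\theta^*, x_1) = D(\theta^*, \tau^*) \le \max_{\theta\in\Theta} D(\theta,\tau^*) = D(\Theta\|\tau^*) = D^*(\Theta)$, where the last equality uses that $\tau^*$ minimizes $D(\Theta\|\cdot)$. For the stability term, taking the conditional expectation gives $\E_{t-1}\sum_j x_t(j)\hat{\ell}_t(j)^2 = \sum_j \ell_t(j)^2 \le K$ since $\ell_t(j)\in[0,1]$. Combining the two yields $R_T \le D^*(\Theta)/\eta + \eta T K/2$, and substituting $\eta = \sqrt{2D^*(\Theta)/(TK)}$ gives the claimed $\sqrt{2TD^*(\Theta)K}$.

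The main obstacle is twofold and concerns the feasible set. First, I must ensure $\tau^* \in \mathrm{co}(\Theta)$ so that initializing the algorithm there is legitimate and $x_1$ is realizable as a mixture of policies; this is guaranteed by the footnoted fact (Theorem~11.6.1 in \cite{cover2006}) that the information radius is attained inside $\mathrm{co}(\Theta)$. Second, the KL-projection is onto $\mathrm{co}(\Theta)$ rather than onto the full simplex as in vanilla EXP3, and the generalized Pythagorean step must be verified in this setting so that the divergence terms telescope with comparator $\theta^*$; this is the one place where the structure of $\Theta$ genuinely enters the analysis, and where care is needed to confirm that projecting onto $\mathrm{co}(\Theta)$ only tightens the divergence to every point of $\mathrm{co}(\Theta)$.
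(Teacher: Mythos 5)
Your proposal is correct and follows essentially the same route as the paper's proof: both recast the regret as OMD on the importance-weighted losses over $\mathrm{co}(\Theta)$ with the negative-entropy regularizer, bound the penalty term by $D(\theta^*,\tau^*)\le D^*(\Theta)$ thanks to the initialization at $\tau^*$, bound the stability term by $K$, and tune $\eta$. The only difference is presentational: where the paper cites a standard mirror-descent regret lemma (Lemma 6.14 in \cite{orabona}), you inline its derivation via the two-step update and the generalized Pythagorean inequality for the KL-projection onto $\mathrm{co}(\Theta)$.
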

\begin{proof}
    Noting  that $\hat{\ell}_t$ is unbiased given $x_t$, we get that
    $
         R_T 
        = \E \sum\nolimits_t \langle x_t - \theta^*, \hat{\ell}_t \rangle.
    $
    This expression is the regret of OMD on the estimated losses with a decision set $\text{co}(\Theta)$ and the negative entropy function $\psi(x) = \sum_{j} x(j) \log x(j)$ as the regularizer. Using Lemma 6.14 in \cite{orabona},\footnote{See also, still in \cite{orabona}, the discussion leading up to Theorem 10.2 concerning the negative entropy regularizer.} we get that:     
    \begin{equation} \label{cvxhull-omd}
        R_T \leq \frac{D(\theta^*, \tau^*)}{\eta} + \frac{\eta}{2} \E \sum\nolimits_t\sum\nolimits_j x_t(j) \hat{\ell}^2_t(j),
    \end{equation}
    Similar to the last step of the proof of Theorem \ref{upper:alt-losses:thm}, we can show that
    $\E \sum_{j} x_t(j) \hat{\ell}^2_t(j) \leq K$. 
    The proof concludes by bounding $D(\theta^*, \tau^*)$ with $D^*(\Theta)$ and plugging in the value of $\eta$.
\end{proof}

    %\begin{equation*} %\label{cvxhull-omd}
    %    R_T \leq \frac{D(\theta^*, \tau)}{\eta} + \frac{\eta}{2} \E \sum_{t=1}^T\sum_{j=1}^K z_t(j) \hat{\ell}^2_t(j),
    %\end{equation*}
    %where $z_t$ lies on the line segment between $x_t$ and $\tilde{x}_{t+1} = \argmin_{x \in \text{dom}(\psi)} \eta \langle x, \hat{\ell}_t \rangle + D(x, x_t)$. It can be shown that $\tilde{x}_{t+1}(j) = x_t(j) \exp(-\eta \hat{\ell}_t(j))$, and since $\hat{\ell}_t(j)$ is non-negative, we get that $\tilde{x}_{t+1}(j) \leq x_t(j)$, and thus $\E \sum_{j=1}^K z_t(j) \hat{\ell}^2_t(j) \leq \E \sum_{j=1}^K x_t(j) \hat{\ell}^2_t(j) \leq K$, where the last inequality follows from the definition of $\hat{\ell}_t(j)$ and the fact that $\ell_t(j)\leq 1$. 
It can be shown that $D^*(\Theta)\leq \log N$. Moreover, if the policy set is symmetric, in the sense that the KL-divergence between any policy and the uniform mixture is the same, then $D^*(\Theta)$ (attained at the uniform mixture) coincides with the Jensen-Shannon divergence~\cite{jensen-shannon} (or the information radius) of the policy set.

\begin{algorithm} [thb]
    \caption{OSMD on the Convex Hull of Policies}
    \label{osmd}
    \begin{algorithmic}[1]
        \State \textbf{Input:} $K$, $\Theta$, $\eta$, $\tau \in \text{co}(\Theta): \tau(j)>0 \: \forall j \in [K]$
        \State \textbf{Initialize:} $x_1=\tau$
        \For{$t=1,\dotsc,T$}
            \State Pick distribution $P_t$ on $\Theta$ such that $\sum_{\theta \in \Theta} P_t(\theta) \theta = x_t$
            \State Draw $\theta_t \sim P_t$, then $A_t \sim \theta_t$, and observe loss $\ell_t (A_t)$
            \State $\forall j \in [K]$, set $\hat{\ell}_t(j) = \frac{\mathbb{I}\{j=A_t\}}{x_t(j)} \ell_t(A_t)$
            \State Update $x_{t+1} = \argmin_{x \in \text{co}(\Theta)} \eta \langle x, \hat{\ell}_t \rangle + D(x, x_t)$
        \EndFor
    \end{algorithmic}
\end{algorithm}
Algorithm~\ref{exp4} can also be seen as an instance of OSMD with the negative entropy regularizer. The main difference is that the decision space is the entire probability simplex over the policies. Hence, the regularization in Algorithm~\ref{exp4} favors exploring uniformly over the policies, whereas it favors exploring uniformly over actions in Algorithm~\ref{osmd}. Analysis-wise (compare in particular \eqref{upper:alt-losses:hedge} and \eqref{cvxhull-omd}), Algorithm~\ref{exp4} takes advantage of the similarity between policies to reduce the variance of their loss estimates (compared to $K$), whereas Algorithm~\ref{osmd} takes advantage of the (possibly) limited size of $\text{co}(\Theta)$ to reduce the bias term (compared to $\log N$).

Consider the $\epsilon$-uniform structure of Example \ref{example-epsilon}.
%$\tau^*$ is the uniform distribution, and thus, 
$D^*(\Theta)$ in this case is given by
\begin{equation*} 
    \frac{K-1}{K} (1-\epsilon) \log(1-\epsilon) + \frac{1+\epsilon(K-1)}{K} \log(1+\epsilon(K-1)).
\end{equation*}
If we now compare $K D^*(\Theta)$ and $\epsilon K \log K$, 
we see that both are equal when $\epsilon\in\{0,1\}$. However, the former is strictly convex for $\epsilon\in (0,1)$, while the latter is linear. Thus, in this case, the bound of Theorem \ref{cvxhull-thm} is better than that of Theorem \ref{upper:alt-losses:thm}.
However, if we consider the structure of Example \ref{example-flower}, we have that
%$\tau^*$ is the uniform mixture of the policies, and 
$D^*(\Theta) = \frac{M-V}{M} \log N$. Thus, the bound of Theorem \ref{cvxhull-thm} becomes $\sqrt{2T\frac{M-V}{M}K\log N}$ which is worse than the $\sqrt{2T\frac{M-V}{M}N\log N}$ bound of Theorem \ref{upper:alt-losses:thm} since $K \geq N$.

\section{Lower Bounds}
In this section, we prove minimax lower bounds for specific classes of policy sets and contrast them with the regret bounds discussed thus far. More precisely, with a fixed policy set $\Theta$, we prove lower bounds on $\inf_\pi \sup_{(\ell_t)_{t=1}^T} R_T$, 
where $\pi$ is the player's strategy. To this end, we will consider a class of stochastic environments, each identified by the vector $\mu \in [0,1]^K$ such that, for $j \in [K]$ and any $t$, $\ell_t(j)$ is drawn from a Bernoulli distribution with mean $\mu(j)$. For any $t \leq T$, let $H_t = (\theta_1, A_1, \ell_1(A_1), \dotsc, \theta_t, A_t, \ell_t(A_t))$ be the interaction history up to round $T$. Each environment $\mu$ (together with strategy $\pi$) induces a probability distribution $P_\mu$ on $H_T$. Define: 
\begin{equation*}
    R_T(\mu) = \max_{\theta^* \in \Theta} \mathbb{E}_\mu  \sum_{t=1}^T \sum_{j=1}^K (\theta_t(j) - \theta^*(j)) \mu(j),
\end{equation*}
where the subscript in $\mathbb{E}_\mu$ emphasizes the dependence on $P_\mu$. 
%In the following, we will use the fact that a lower bound on $\sup_\mu R_T(\mu)$ that is independent of $\pi$ is also a lower bound on $\inf_\pi \sup_{(\ell_t)_{t=1}^T} R_T$. 
In the following, we will prove lower bounds on $\sup_\mu R_T(\mu)$ that hold for any algorithm. 
We will rely on the following lemma, which is an immediate extension of a standard result (see Lemma 15.1 in \cite{lattimore2020bandit}) to our setting.
\begin{lemma} \label{low:kl-decomp}
Fix a strategy $\pi$, a policy set $\Theta$, and a horizon $T$; and let $\mu$ and $\mu'$ be two environments. Then, 
\begin{equation*}
    D(P_\mu, P_{\mu'}) = \sum\nolimits_{\theta \in \Theta} N_\mu(\theta; T) \sum\nolimits_j \theta(j) d(\mu(j),\mu'(j)),
\end{equation*}
where $N_\mu(\theta; T) = \E_\mu\sum_{t=1}^T\mathbb{I}\{\theta_t=\theta\}$, and $d(a,b)$ is the KL-divergence between two Bernoulli distributions with means $a$ and $b$.
\end{lemma}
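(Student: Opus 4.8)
The plan is to establish this KL-divergence decomposition via the chain rule for relative entropy applied to the joint distribution $P_\mu$ over the full history $H_T$. The standard result (Lemma 15.1 in \cite{lattimore2020bandit}) handles the case where the learner plays actions directly and observes their losses; here the learner instead selects a \emph{policy} $\theta_t$, an action $A_t$ is sampled from $\theta_t$, and the loss $\ell_t(A_t)$ is observed. The key observation is that the only component of the per-round interaction whose law \emph{differs} between environments $\mu$ and $\mu'$ is the observed loss $\ell_t(A_t)$, which is Bernoulli with mean $\mu(A_t)$ under $P_\mu$ and mean $\mu'(A_t)$ under $P_{\mu'}$. The policy choice $\theta_t$ depends only on the past history $H_{t-1}$ (through the fixed strategy $\pi$), and the action draw $A_t \sim \theta_t$ is governed by the fixed, known policy $\theta_t$ — neither of these conditional laws depends on $\mu$.

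First I would write the density of $P_\mu$ over $H_T$ as a product over rounds using the chain rule, factoring each round into the conditional law of $\theta_t$ given $H_{t-1}$, the conditional law of $A_t$ given $(H_{t-1},\theta_t)$, and the conditional law of $\ell_t(A_t)$ given $(H_{t-1},\theta_t,A_t)$. Taking the log-ratio $\log \frac{dP_\mu}{dP_{\mu'}}$, the strategy factors and the action-sampling factors cancel identically (they are the same functions of the history under both environments), leaving only the loss-observation factors. Thus
\[
    D(P_\mu, P_{\mu'}) = \E_\mu \sum_{t=1}^T d\big(\mu(A_t), \mu'(A_t)\big),
\]
where the conditional KL at round $t$, given $(H_{t-1},\theta_t,A_t)$, is exactly $d(\mu(A_t),\mu'(A_t))$ since the loss is Bernoulli.

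Next I would convert the expectation over the realized action $A_t$ into the stated form involving the policy probabilities $\theta(j)$. Conditioning on $\theta_t$ and taking the expectation over $A_t \sim \theta_t$, the inner term becomes $\sum_{j} \theta_t(j)\, d(\mu(j),\mu'(j))$. Then I would interchange the sum over $t$ with an indicator decomposition over which policy was played, writing $\theta_t = \sum_{\theta \in \Theta} \mathbb{I}\{\theta_t = \theta\}\,\theta$. Pulling the expectation inside and recognizing $\E_\mu \sum_{t=1}^T \mathbb{I}\{\theta_t = \theta\} = N_\mu(\theta;T)$ yields precisely
\[
    D(P_\mu, P_{\mu'}) = \sum_{\theta \in \Theta} N_\mu(\theta;T) \sum_j \theta(j)\, d(\mu(j),\mu'(j)).
\]

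The main obstacle — really the only subtle point — is justifying the cancellation of the policy-selection and action-sampling factors and confirming that the observation channel's conditional law depends on the environment \emph{only} through $\mu(A_t)$. This is where our mediated-feedback setting departs from the textbook version: one must be careful that the learner observes $\ell_t(A_t)$ (the loss of the \emph{sampled} action), not the expected loss $\ell_t(\theta_t)$, so the relevant Bernoulli mean is $\mu(A_t)$ rather than $\sum_j \theta_t(j)\mu(j)$. Once this is pinned down, the rest is a routine application of the chain rule and tower property, which is why I would state it as an immediate extension of the standard lemma rather than reproving it in full detail.
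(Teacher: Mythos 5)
Your proposal is correct and is precisely the argument the paper has in mind: the paper gives no explicit proof, stating the lemma as an ``immediate extension'' of Lemma 15.1 in \cite{lattimore2020bandit}, and your chain-rule factorization --- cancelling the policy-selection and action-sampling factors, reducing to the conditional Bernoulli divergences $d(\mu(A_t),\mu'(A_t))$, and then applying the tower property to pass to $\sum_j \theta_t(j)\,d(\mu(j),\mu'(j))$ and the counts $N_\mu(\theta;T)$ --- is exactly that extension, including the correct identification that the observation channel depends on the environment only through $\mu(A_t)$ and not through $\ell_t(\theta_t)$.
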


\subsection{Radially Symmetric Uniform Policies}
The first lower bound concerns the case discussed in Example \ref{example-flower}. The construction of the lower bound (which is an adaptation of the standard approach in \cite{adversarial}) leverages the fact that the policies are uniform and equidistant.
\begin{theorem} \label{low:flower:thm}
If $\Theta$ conforms to the structure of Example \ref{example-flower} such that $M > V$. Then for any algorithm and $T \geq \frac{N M}{4 \log(4/3)(M-V)}$, there exists a sequence of losses such that
$
    R_T \geq \frac{1}{18} \sqrt{N\frac{M-V}{M} T}.
$
\end{theorem}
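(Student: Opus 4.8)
The plan is to follow the standard information-theoretic recipe for bandit lower bounds (as in \cite{adversarial} and Lemma~\ref{low:kl-decomp}), specialized to exploit the radial symmetry of the policy set. I would introduce a reference environment $\mu_0$ in which every arm has Bernoulli mean $1/2$, and then $N$ perturbed environments $\mu_1,\dots,\mu_N$, where $\mu_i$ lowers the mean of each of the $M-V$ arms \emph{exclusive} to policy $\theta_i$ from $1/2$ to $1/2-\Delta$, leaving the $V$ common arms and all other petals untouched; the gap $\Delta$ is tuned at the end. Under $\mu_i$, a direct computation gives $\ell(\theta_i)=\tfrac12-G\Delta$ with $G:=\tfrac{M-V}{M}$, while $\ell(\theta_k)=\tfrac12$ for every $k\neq i$; hence $\theta_i$ is the unique optimal policy and each round in which a different policy is played costs exactly $G\Delta$. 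Writing $N_i(\theta;T)$ for the expected number of pulls of $\theta$ under $\mu_i$, this yields the clean identity $R_T(\mu_i)=G\Delta\,(T-N_i(\theta_i;T))$, so that $\sup_\mu R_T(\mu)\ge \tfrac1N\sum_i R_T(\mu_i)$.

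The core step is to bound the average of $N_i(\theta_i;T)$ against the reference counts $N_0(\theta_i;T)$. Here the exclusivity of the petals is essential: applying Lemma~\ref{low:kl-decomp} to $D(P_0,P_i)$, the only arms with $\mu_0(j)\neq\mu_i(j)$ lie in petal $i$, and the only policy placing mass there is $\theta_i$ itself, so all cross-terms vanish and $D(P_0,P_i)=N_0(\theta_i;T)\,G\,d(\tfrac12,\tfrac12-\Delta)$. Combining Pinsker's inequality $N_i(\theta_i;T)-N_0(\theta_i;T)\le T\sqrt{\tfrac12 D(P_0,P_i)}$ with the Cauchy--Schwarz bound $\sum_i\sqrt{N_0(\theta_i;T)}\le\sqrt{NT}$ and the identity $\sum_i N_0(\theta_i;T)=T$, I would obtain
\[
\frac1N\sum_{i=1}^N\bigl(T-N_i(\theta_i;T)\bigr)\ \ge\ T\left(1-\frac1N-\sqrt{\frac{G\,d(\tfrac12,\tfrac12-\Delta)\,T}{2N}}\right),
\]
whence $\sup_\mu R_T(\mu)\ge G\Delta$ times the right-hand side.

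It remains to choose $\Delta$. I would set $d(\tfrac12,\tfrac12-\Delta)=\tfrac{N}{8GT}$, which forces the square-root term to equal $\tfrac14$; using $N\ge2$ the bracket is then at least $\tfrac14$, giving $\sup_\mu R_T(\mu)\ge G\Delta T/4$. The remaining work---and the main delicate point---is to convert this implicit constraint on $d$ into an explicit lower bound on $\Delta$ while matching the constant $\tfrac1{18}$. Using $d(\tfrac12,\tfrac12-\Delta)=-\tfrac12\log(1-4\Delta^2)$ and the concavity of $d\mapsto 1-e^{-2d}$ on $[0,d_0]$ with $d_0=d(\tfrac12,\tfrac14)=\tfrac12\log(4/3)$, one gets the chord bound $1-e^{-2d}\ge d/(2\log(4/3))$, hence $\Delta\ge\tfrac18\sqrt{N/(GT\log(4/3))}$. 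This is exactly where the hypothesis $T\ge\frac{NM}{4\log(4/3)(M-V)}$ enters: it guarantees $d=\tfrac{N}{8GT}\le d_0$, i.e.\ $\Delta\le\tfrac14$, so the concavity bound applies. Substituting back gives $\sup_\mu R_T(\mu)\ge\frac{1}{32\sqrt{\log(4/3)}}\sqrt{G N T}$, and since $1/(32\sqrt{\log(4/3)})>1/18$ the claim follows after recalling $G=\tfrac{M-V}{M}$. The obstacle is thus not the high-level scheme but the careful constant bookkeeping in this final step, together with the additive $1/N$ loss, which is what restricts the clean argument to $N\ge2$.
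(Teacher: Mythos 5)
Your proposal is correct and follows essentially the same route as the paper's proof: identical environment construction (reference $\mu_0$ plus one perturbed environment per policy, lowering the means of its exclusive arms), the same gap identity $R_T(\mu_\theta)=\tfrac{M-V}{M}\Delta\,(T-N_{\mu_\theta}(\theta;T))$, the same use of Lemma~\ref{low:kl-decomp} with Pinsker and Cauchy--Schwarz (equivalently, concavity of the square root) after averaging over the $N$ environments. The only difference is cosmetic: you tune the Bernoulli KL $d$ implicitly and invert via a chord bound on $1-e^{-2d}$, whereas the paper tunes $\Delta$ explicitly and bounds $d(\tfrac12,\tfrac12-\Delta)\le 8\log(4/3)\Delta^2$; both yield the identical constant $1/(32\sqrt{\log(4/3)})>1/18$.
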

Since $\mathcal{S}^*(\Theta)=N\frac{M-V}{M}$, it follows that the bound of Theorem \ref{upper:alt-losses:thm} is optimal in this case, up to a logarithmic factor. 
\begin{proof}
    We denote by $U(\theta) \subseteq [K]$ the support of $\theta$ and  $C = \bigcap_{\theta \in \Theta} U(\theta)$. We consider $N$ environments $\{\mu_{\theta}\}_{\theta \in \Theta}$ such that for $\mu_{\theta}$ and arm $j$, $\mu_{\theta}(j) = \frac{1}{2} - \Delta\mathbb{I}\{j \in U(\theta) \backslash C\}$, where $0 < \Delta < \frac{1}{2}$ is to be tuned later. Let $\mu_0$ be an environment such that $\mu_0(j) = \frac{1}{2}$ for any arm $j$. Note that $\theta$ is the optimal policy in $\mu_{\theta}$, and for $\theta' \in \Theta \backslash \{\theta\}$, we have that:
    \begin{equation*}
        \sum_{j=1}^K (\theta'(j) - \theta(j)) \mu(j) = \Delta \sum_{j\in U(\theta) \backslash C} \left(\frac{1}{M} - 0\right) = \Delta \frac{M-V}{M}.
    \end{equation*}
    And thus, 
    \begin{align*}
        R_T(\mu_\theta) &= \Delta \frac{M-V}{M} (T - N_{\mu_\theta}(\theta; T)) \\
        &\geq \Delta \frac{M-V}{M} \bigg(T - N_{\mu_0}(\theta; T) - T \sqrt{\frac{1}{2}D(P_{\mu_0},P_{\mu_\theta})}\bigg),
    \end{align*}
    where the inequality follows from the fact that\footnote{See Exercise 14.4 in \cite{lattimore2020bandit} for a general version of this inequality.} $N_{\mu_\theta}(\theta; T) - N_{\mu_0}(\theta; T) \leq T \tv(P_{\mu_0},P_{\mu_\theta})$ and by Pinsker's inequality. Starting from Lemma \ref{low:kl-decomp}, we have that:
    \begin{align*}
         D(P_{\mu_0}, P_{\mu_\theta}) &= \sum_{\theta' \in \Theta} N_{\mu_0}(\theta'; T) \sum_{j=1}^K \theta'(j) d(\mu_0(j),\mu_\theta(j)) \\
         &= N_{\mu_0}(\theta; T) \sum_{j\in U(\theta) \backslash C} \frac{1}{M} d\left(\frac{1}{2},\frac{1}{2}-\Delta\right)\\
         &= \frac{M-V}{M} N_{\mu_0}(\theta; T) \left(-\frac{1}{2} \log(1-4\Delta^2)\right)\\
         &\leq \frac{M-V}{M} N_{\mu_0}(\theta; T) c \Delta^2, 
    \end{align*}
    where the second equality holds since the only arms whose means have changed between the two environments lie exclusively in the support of $\theta$,
    and the inequality holds for $\Delta \leq \frac{1}{4}$ with $c = 8\log{\frac{4}{3}}$. Hence:
    \begin{align*}
        \sup_{\mu} R_T(\mu) &\geq \frac{1}{N}\sum_{\theta \in \Theta} R_T(\mu_\theta)\\
        &\geq \Delta \frac{M-V}{M} \bigg(T - \frac{T}{N} - T \sqrt{\frac{1}{2}\frac{M-V}{M} \frac{T}{N} c \Delta^2}\bigg)\\
        &\geq \Delta \frac{M-V}{M} T \bigg(\frac{1}{2} - \Delta \sqrt{\frac{1}{2}\frac{M-V}{M} \frac{T}{N} c}\bigg),
    \end{align*}
    where the second inequality holds by the concavity of the square root, and the third since $N\geq2$. The theorem then follows by setting $\Delta = \frac{1}{4} \sqrt{\frac{2MN}{c(M-V)T}}$ and verifying that the condition on $T$ ensures that $\Delta \leq \frac{1}{4}$.
\end{proof}

\subsection{\texorpdfstring{$\epsilon$}{Epsilon}-Uniform Policies}
The next bound concerns the $\epsilon$-Uniform structure of Example \ref{example-epsilon}. While all policies are still equidistant, this case does not enjoy the peculiar discrete structure of Example~\ref{example-flower}.
\begin{theorem} \label{thm:eps-uniform}
    If $\Theta$ conforms to the $\epsilon$-uniform structure of Example \ref{example-epsilon}. Then for $T \geq \frac{K}{4\log(4/3)}$ and any algorithm, there exists a sequence of losses such that
    $
    R_T \geq \frac{1}{18}\epsilon\sqrt{KT}.
    $
\end{theorem}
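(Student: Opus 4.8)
The plan is to mirror the proof of Theorem~\ref{low:flower:thm}, adapting the environment construction to the $\epsilon$-uniform geometry. Since $N=K$, I associate with each policy $\theta$ its distinguished arm $a_\theta$ and define an environment $\mu_\theta$ that lowers only that arm's mean below $\tfrac12$: set $\mu_\theta(j) = \tfrac12 - \Delta\,\mathbb{I}\{j = a_\theta\}$ for a parameter $0 < \Delta \le \tfrac14$ to be tuned, together with the reference environment $\mu_0 \equiv \tfrac12$. A direct computation shows that $\theta$ is optimal in $\mu_\theta$ and that for every $\theta' \neq \theta$ the gap is $\sum_j (\theta'(j)-\theta(j))\mu_\theta(j) = \Delta\big((\tfrac{1-\epsilon}{K}+\epsilon) - \tfrac{1-\epsilon}{K}\big) = \Delta\epsilon$, driven entirely by the extra mass $\epsilon$ that $\theta$ places on $a_\theta$. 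Consequently $R_T(\mu_\theta) = \Delta\epsilon\,(T - N_{\mu_\theta}(\theta; T))$, and I bound $N_{\mu_\theta}(\theta; T) \le N_{\mu_0}(\theta; T) + T\sqrt{\tfrac12 D(P_{\mu_0}, P_{\mu_\theta})}$ via the transport inequality and Pinsker, exactly as before.

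The step that differs from the flower case---and the one I expect to require the most care---is the KL computation through Lemma~\ref{low:kl-decomp}. Here every policy has full support, so the only perturbed arm $a_\theta$ is played under \emph{every} policy, not just under $\theta$. Writing $\theta'(a_\theta) = \tfrac{1-\epsilon}{K} + \epsilon\,\mathbb{I}\{\theta'=\theta\}$ and using $\sum_{\theta'} N_{\mu_0}(\theta'; T) = T$, the decomposition yields
\[
D(P_{\mu_0}, P_{\mu_\theta}) = d\big(\tfrac12, \tfrac12 - \Delta\big)\Big(\tfrac{1-\epsilon}{K}\,T + \epsilon\,N_{\mu_0}(\theta; T)\Big),
\]
that is, a term proportional to the whole horizon $T$ (from the shared uniform mass) on top of the familiar $\epsilon\,N_{\mu_0}(\theta;T)$ contribution seen in the flower proof. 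I then use $d(\tfrac12, \tfrac12-\Delta) = -\tfrac12\log(1-4\Delta^2) \le c\Delta^2$ with $c = 8\log\tfrac43$, valid for $\Delta \le \tfrac14$.

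The key simplification is that the extra $T$-term is harmless after averaging over $\theta$: since $N=K$, summing the bracket over the $K$ environments gives $(1-\epsilon)T + \epsilon T = T$, so $\tfrac1N\sum_\theta D(P_{\mu_0}, P_{\mu_\theta}) \le c\Delta^2 T/K$. Starting from $\sup_\mu R_T(\mu) \ge \tfrac1N\sum_\theta R_T(\mu_\theta)$, applying concavity of the square root to pull the average inside the root, and using $\tfrac1N\sum_\theta N_{\mu_0}(\theta;T) = T/K$ together with $N=K\ge 2$, I obtain $\sup_\mu R_T(\mu) \ge \Delta\epsilon T\big(\tfrac12 - \Delta\sqrt{cT/(2K)}\big)$. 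Finally I set $\Delta = \tfrac14\sqrt{2K/(cT)}$: the hypothesis $T \ge K/(4\log\tfrac43)$ guarantees $\Delta \le \tfrac14$, legitimizing the earlier bounds, and substitution gives a lower bound of $\tfrac{\epsilon}{32\sqrt{\log(4/3)}}\sqrt{KT}$. Since $32\sqrt{\log(4/3)} < 18$, this is at least $\tfrac{1}{18}\epsilon\sqrt{KT}$, as claimed.
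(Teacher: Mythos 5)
Your proposal is correct and follows essentially the same route as the paper's proof: the same environments $\mu_\theta(j)=\tfrac12-\Delta\,\mathbb{I}\{j=a_\theta\}$ with reference $\mu_0$, the same gap $\Delta\epsilon$, the same transport-plus-Pinsker step, the same KL decomposition via Lemma~\ref{low:kl-decomp} yielding the bracket $\tfrac{1-\epsilon}{K}T+\epsilon N_{\mu_0}(\theta;T)$, and the same averaging argument in which the full-support $T$-term collapses to $c\Delta^2 T/K$, followed by the identical tuning $\Delta=\tfrac14\sqrt{2K/(cT)}$. The final constant check ($32\sqrt{\log(4/3)}<18$) is also the one implicit in the paper, so nothing is missing.
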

The proof, see Appendix \ref{appendix-eps-uni-proof}, is similar to that of Theorem~\ref{low:flower:thm} apart from the fact that all policies contribute to the KL-divergence between any two environments (see Lemma~\ref{low:kl-decomp}) 
since all policies have full support.
Notice that the bound is of order $\sqrt{\epsilon^2KT}$ instead of $\sqrt{\epsilon KT} = \sqrt{\mathcal{S}^*(\Theta)T}$, which would have nearly matched the bound of Theorem~\ref{upper:alt-losses:thm}. This was expected since we have shown that Algorithm~\ref{osmd} enjoys a better regret bound in this case. It is interesting to see if a matching lower bound could be proved.  

\subsection{The Two Policies Case}
For the two policies case, we can prove a lower bound of order $\sqrt{H^2(\theta_1,\theta_2)T}$ as asserted by the following theorem, where $H^2(\theta_1,\theta_2)=\frac{1}{2}\sum_j (\sqrt{\theta_1(j)}-\sqrt{\theta_2(j)})^2$ is the squared Hellinger distance. Relative to the total variation, we have that in general:
$
    \frac{1}{2}D^2_{TV}(\theta_1,\theta_2) \leq H^2(\theta_1,\theta_2) \leq D_{TV}(\theta_1,\theta_2).
$ 
%Note that when the two policies conform to the structure of Example~\ref{example-flower}, $H^2(\theta_1,\theta_2)$ and $\tv(\theta_1,\theta_2)$ coincide.

\begin{theorem} \label{thm:H2}
    Assume that $\Theta=\{\theta_1,\theta_2\}$ and $H^2(\theta_1,\theta_2) > 0$. Then for any algorithm and $T \geq \frac{1}{8\log(4/3)H^2(\theta_1,\theta_2)}$, there exists a sequence of losses such that
    $
    R_T \geq \frac{1}{13}\sqrt{H^2(\theta_1,\theta_2)T}.$
\end{theorem}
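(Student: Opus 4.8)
The plan is to follow the two-point (plus baseline) template of the proof of Theorem~\ref{low:flower:thm}, but to \emph{optimize the per-arm loss perturbation} so that the informativeness-to-gap trade-off is governed by the Hellinger distance rather than by a support structure. The crucial simplification in the two-policy case is that at every round exactly one of $\theta_1,\theta_2$ is played, so under any baseline environment $\mu_0$ we have $N_{\mu_0}(\theta_1;T)+N_{\mu_0}(\theta_2;T)=T$.

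Concretely, I would take $\mu_0(j)=\tfrac12$ for all $j$ and, for a parameter $\Delta\in(0,\tfrac14]$ to be tuned, set $\delta(j)=\Delta\,(\theta_1(j)-\theta_2(j))/(\theta_1(j)+\theta_2(j))$, which is well defined and satisfies $|\delta(j)|\le\Delta$ because every arm lies in the support of some policy. Define the environments $\mu_1(j)=\tfrac12-\delta(j)$ and $\mu_2(j)=\tfrac12+\delta(j)$, both valued in $[\tfrac14,\tfrac34]$. A direct computation shows that $\theta_1$ is optimal under $\mu_1$, that $\theta_2$ is optimal under $\mu_2$, and that in both cases the suboptimality gap equals $G=\Delta\,\Lambda$, where $\Lambda=\Lambda(\theta_1,\theta_2):=\sum_j (\theta_1(j)-\theta_2(j))^2/(\theta_1(j)+\theta_2(j))$ is the triangular discrimination. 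This is exactly where the choice of $\delta$ pays off: by a Cauchy--Schwarz argument it is the direction that maximizes the gap for a fixed amount of information, and one has $2H^2(\theta_1,\theta_2)\le\Lambda$, which will convert the final bound into one on $H^2$.

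Next I would control the divergences via Lemma~\ref{low:kl-decomp}. Using $d(\tfrac12,\tfrac12\pm x)=-\tfrac12\log(1-4x^2)\le c\,x^2$ for $|x|\le\tfrac14$ with $c=8\log\tfrac43$ (as in Theorem~\ref{low:flower:thm}) together with $\theta(j)\le\theta_1(j)+\theta_2(j)$ for $\theta\in\{\theta_1,\theta_2\}$, one gets $\sum_j\theta(j)\,d(\mu_0(j),\mu_i(j))\le c\Delta^2\Lambda$, and hence $D(P_{\mu_0},P_{\mu_1}),\,D(P_{\mu_0},P_{\mu_2})\le c\Delta^2\Lambda\,T$ since $\sum_\theta N_{\mu_0}(\theta;T)=T$. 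Writing $R_T(\mu_i)=G\,(T-N_{\mu_i}(\theta_i;T))$, applying $N_{\mu_i}(\theta_i;T)-N_{\mu_0}(\theta_i;T)\le T\,\tv(P_{\mu_0},P_{\mu_i})\le T\sqrt{\tfrac12 D(P_{\mu_0},P_{\mu_i})}$ and Pinsker exactly as before, and crucially using $N_{\mu_0}(\theta_1;T)+N_{\mu_0}(\theta_2;T)=T$, I obtain $\sup_\mu R_T(\mu)\ge\tfrac12\big(R_T(\mu_1)+R_T(\mu_2)\big)\ge\tfrac12 G T\big(1-2\Delta\sqrt{\tfrac{c}{2}\Lambda T}\big)$.

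Finally, tuning $\Delta=\tfrac14\big(\tfrac{c}{2}\Lambda T\big)^{-1/2}$ makes the parenthesis equal $\tfrac12$ and gives $\sup_\mu R_T(\mu)\ge\tfrac{1}{16}\sqrt{2/c}\,\sqrt{\Lambda T}\ge\tfrac{1}{13}\sqrt{H^2(\theta_1,\theta_2)\,T}$ after substituting $c=8\log\tfrac43$ and $\Lambda\ge 2H^2$; the stated requirement $T\ge\big(8\log\tfrac43\,H^2\big)^{-1}$ is precisely what guarantees the tuned $\Delta\le\tfrac14$, so that the perturbation is admissible. I expect the only genuinely delicate point to be the first: identifying $\delta(j)\propto(\theta_1(j)-\theta_2(j))/(\theta_1(j)+\theta_2(j))$ as the informativeness-optimal perturbation and recognizing the resulting quantity as the triangular discrimination, which is equivalent to $H^2$ up to constants. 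The remaining manipulations are routine adaptations of the proof of Theorem~\ref{low:flower:thm}.
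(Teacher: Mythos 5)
Your proof is correct, and it follows the same overall skeleton as the paper's: a baseline environment $\mu_0 \equiv \tfrac12$, two perturbed environments, the regret decomposition $R_T(\mu_i) = G\,(T - N_{\mu_i}(\theta_i;T))$, Lemma~\ref{low:kl-decomp} plus Pinsker, averaging over the two environments using $N_{\mu_0}(\theta_1;T)+N_{\mu_0}(\theta_2;T)=T$, and the same tuning of $\Delta$. The one genuine difference is the choice of perturbation direction, which is the heart of the argument. The paper perturbs proportionally to $z_1(j) = \bigl(\sqrt{\theta_1(j)}-\sqrt{\theta_2(j)}\bigr)/\bigl(\sqrt{\theta_1(j)}+\sqrt{\theta_2(j)}\bigr)$, so that the gap is exactly $2\Delta H^2(\theta_1,\theta_2)$ and the KL bound is $2c\Delta^2 H^2(\theta_1,\theta_2) T$ (via $\theta_1(j)\le(\sqrt{\theta_1(j)}+\sqrt{\theta_2(j)})^2$); Hellinger appears directly and no conversion is needed. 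You instead perturb proportionally to $\bigl(\theta_1(j)-\theta_2(j)\bigr)/\bigl(\theta_1(j)+\theta_2(j)\bigr)$, which makes the gap equal $\Delta\Lambda$ and the KL bound $c\Delta^2\Lambda T$ (via $\theta(j)\le\theta_1(j)+\theta_2(j)$), where $\Lambda$ is the triangular discrimination, and you convert at the end using $\Lambda \ge 2H^2(\theta_1,\theta_2)$. Both routes give the identical final constant $\tfrac{1}{8\sqrt{c}} \ge \tfrac{1}{13}$ under the stated condition on $T$ (your condition $T\ge 2/(c\Lambda)$ is indeed implied by the theorem's hypothesis, again via $\Lambda\ge 2H^2$). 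What your variant buys: the intermediate bound $\Omega(\sqrt{\Lambda T})$ is stated in terms of the triangular discrimination and is never weaker — and up to a factor $\sqrt{2}$ stronger, since $2H^2\le\Lambda\le 4H^2$ — which aligns with the paper's remark after Theorem~\ref{thm:H2} that the result can equivalently be phrased in terms of the triangular discrimination. What the paper's choice buys: Hellinger appears natively, so the proof closes without invoking the comparison inequality between the two divergences.
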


The proof follows a similar scheme as before, so we only sketch the main distinctions and defer the full proof to Appendix \ref{appendix-two-policies-proof}. For an arm $j$, we define $z_1(j) = \frac{\sqrt{\theta_1(j)} - \sqrt{\theta_2(j)}}{\sqrt{\theta_1(j)} + \sqrt{\theta_2(j)}}$, and $z_2(j) = -z_1(j)$.
We use two environments $\mu_1$ and $\mu_2$, where 
$ \mu_1(j) = 1/2 - \Delta z_1(j),
$
with $\mu_2(j)$ defined analogously. 
Subsequently, focusing on $\mu_1$ and $\theta_1$, 
%we can lower bound $R_T(\mu_1)$ by
we can bound $R_T(\mu_1)$ from below by
\begin{equation*}
    2 \Delta H^2(\theta_1,\theta_2) \bigg(T - N_{\mu_0}(\theta; T) - T \sqrt{\frac{1}{2}D(P_{\mu_0},P_{\mu_1})}\bigg),
\end{equation*}
whereas we can show that $D(P_{\mu_0}, P_{\mu_1}) \leq 2 c \Delta^2 H^2(\theta_1,\theta_2) T$. To see the latter, it suffices to start from Lemma \ref{low:kl-decomp} and to use that 
$
    d(\mu_0(j),\mu_1(j)) \leq c \Delta^2 z_1(j)^2
$
for sufficiently small $\Delta$, and that 
$
    \sum\nolimits_j \theta_1(j) z_1(j)^2 \leq 2H^2(\theta_1,\theta_2).
$
%What remains then is to tune the value of $\Delta$ to maximize the average of the lower bounds on $R_T(\mu_1)$ and $R_T(\mu_2)$.
%\input{two-policies-proof.tex}

The squared Hellinger distance can be related to other measures of divergence. For instance, it is of the same order as the Jensen-Shannon divergence and the triangular discrimination \cite{topsoe, H2-JSD}. Thus, the bound of Theorem \ref{thm:H2} can be stated, up to small constants, in terms of these measures as well. 

\subsection{A Matching Lower Bound for a Class of Policy Sets} \label{sec-multi-task}
Lastly, we provide a lower bound that almost matches both Theorems  \ref{upper:alt-losses:thm} and \ref{cvxhull-thm} for a certain class of policy sets that we discuss shortly. This bound is analogous to the $\sqrt{KT\log N/\log K}$ lower bound proved in \cite{expertslowerbound}. However, unlike \cite{expertslowerbound}, we rely on \textit{fixed} sets of \textit{stochastic} policies.
\begin{theorem}
\label{th:worst-case-lower}
For any integer $q\geq2$, there exists a problem structure where $K\geq2$, $\mathcal{S}^*(\Theta) = q$, and $\mathcal{S}^*(\Theta) \log N = K D^*(\Theta)$; such that any algorithm, for sufficiently large $T$, suffers $\Omega\left(\sqrt{\mathcal{S}^*(\Theta) T \frac{\log N}{\log \mathcal{S}^*(\Theta)}}\right)$ regret.
%Moreover, when $K\geq4$, the policy set in question can be chosen such that $\mathcal{S}^*(\Theta) < \min(K,N)$.
\end{theorem}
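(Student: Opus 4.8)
The plan is to exhibit a \emph{multi-task} policy set assembled from $m$ independent blocks (with $m\ge1$ a free parameter of the construction), so that the matching identity holds exactly and the regret decomposes into $m$ parallel $q$-armed bandit problems. Concretely, I would take $K=mq$ actions partitioned into $m$ blocks $B_1,\dots,B_m$ of size $q$, and index the $N=q^m$ policies by tuples $(c_1,\dots,c_m)\in[q]^m$: the policy $\theta=(c_1,\dots,c_m)$ places mass $1/m$ on the $c_i$-th action of block $B_i$ for each $i$ and zero elsewhere (a uniform distribution over one chosen action per block). Since every action is the chosen action of some policy and avoided by another (as $q\ge2$), we get $\max_{\theta}\theta(j)=1/m$ and $\min_{\theta}\theta(j)=0$ for all $j$, hence $\mathcal{S}^*(\Theta)=K/m=q$. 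By symmetry the uniform distribution $\tau=1/K$ attains $D^*(\Theta)$, and $D(\theta,\tau)=m\cdot\tfrac1m\log\tfrac{1/m}{1/(mq)}=\log q$ for every $\theta$, so $D^*(\Theta)=\log q$; this makes $\mathcal{S}^*(\Theta)\log N=q\cdot m\log q$ equal to $KD^*(\Theta)=mq\log q$, as required, and $\log N/\log\mathcal{S}^*(\Theta)=m$.

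For the lower bound I would use a single \emph{fixed} stochastic environment, so that Lemma~\ref{low:kl-decomp} applies verbatim with no need for phases. For a tuple $a=(a_1,\dots,a_m)\in[q]^m$, let $\mu_a$ assign Bernoulli mean $\tfrac12-\Delta$ to the $a_i$-th action of block $B_i$ and $\tfrac12$ to every other action, with $\Delta\le\tfrac14$ tuned later. The best policy is $\theta^*=(a_1,\dots,a_m)$, and a direct computation gives the per-round regret of $(c_1,\dots,c_m)$ as $\tfrac{\Delta}{m}\,|\{i:c_i\neq a_i\}|$; summing over rounds yields
\begin{equation*}
    R_T(\mu_a)=\frac{\Delta}{m}\sum_{i=1}^m\bigl(T-\E_{\mu_a}[N_i(a_i)]\bigr),
\end{equation*}
where $N_i(a)$ counts the rounds in which the played policy has block-$i$ coordinate equal to $a$. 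Placing an independent uniform prior on each $a_i$, it then suffices to upper bound, for each block $i$, the averaged quantity $\E_{a}\E_{\mu_a}[N_i(a_i)]$.

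The crux is isolating one block while the others remain active. For fixed $i$ I would compare $\mu_a$ with the environment $\bar{\mu}_a^{(i)}$ obtained by resetting block $B_i$ to all-$\tfrac12$ means; under $\bar{\mu}_a^{(i)}$ the law is independent of $a_i$, so by symmetry $\E_{a_i}\E_{\bar{\mu}_a^{(i)}}[N_i(a_i)]=T/q$ for each fixed $a_{-i}$. Using $N_i(a_i)\le T$, Pinsker's inequality, and Lemma~\ref{low:kl-decomp} applied to the single changed arm, one gets $D\bigl(P_{\bar{\mu}_a^{(i)}},P_{\mu_a}\bigr)=\tfrac1m\,\E_{\bar{\mu}_a^{(i)}}[N_i(a_i)]\,d(\tfrac12,\tfrac12-\Delta)\le\tfrac{c\Delta^2}{m}\E_{\bar{\mu}_a^{(i)}}[N_i(a_i)]$ with $c=8\log\tfrac43$; averaging over $a_i$ with Jensen's inequality bounds the total-variation term by $\Delta\sqrt{cT/(2mq)}$. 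Combining, $\E_a\E_{\mu_a}[N_i(a_i)]\le T/q+\Delta T^{3/2}\sqrt{c/(2mq)}$, so the averaged regret is at least $\Delta T\bigl(1-\tfrac1q-\Delta\sqrt{cT/(2mq)}\bigr)$. Choosing $\Delta=\tfrac14\sqrt{2mq/(cT)}$ (which satisfies $\Delta\le\tfrac14$ once $T\ge 2mq/c$) makes the bracket at least $\tfrac14$, giving $\sup_\mu R_T(\mu)\ge\tfrac{1}{16}\sqrt{2mqT/c}=\Omega\bigl(\sqrt{\mathcal{S}^*(\Theta)\,T\,\log N/\log\mathcal{S}^*(\Theta)}\bigr)$.

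The main obstacle I anticipate is precisely this per-block isolation: because every policy spreads only $1/m$ of its mass onto its block-$i$ action, both the effective gap and the information rate for block $i$ are scaled by $1/m$, and one must verify that the $m$ blocks genuinely behave as independent $q$-armed bandits of horizon $T$. It is tempting to instead split time into $m$ phases with a single active block per phase, but that reduces each sub-problem to horizon $T/m$ and yields only $\Omega(\sqrt{qT})$; keeping all blocks simultaneously active under a fixed $\mu_a$ is what recovers the extra $\sqrt{m}=\sqrt{\log N/\log\mathcal{S}^*(\Theta)}$ factor. A secondary point to check is that neutralizing block $i$ leaves intact the symmetry needed for $\E_{a_i}\E_{\bar{\mu}_a^{(i)}}[N_i(a_i)]=T/q$ at every fixed $a_{-i}$.
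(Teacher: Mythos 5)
Your proposal is correct and follows essentially the same route as the paper's proof: the same multi-task block construction with $N=q^m$ uniform policies, the same hard environments $\mu_a$ and block-neutralized reference environments $\bar\mu_a^{(i)}$ (the paper's $\mu_\theta^{-i}$), the same decomposition of regret into per-block agreement counts, and the same Pinsker-plus-KL-decomposition argument with identical tuning of $\Delta$ and condition on $T$. The only cosmetic differences are notational (indexing by tuples rather than by policies, and Jensen's inequality where the paper uses an equivalent Cauchy--Schwarz step), plus your explicit verification that $D^*(\Theta)=\log q$, which the paper states without proof.
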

In the type of structure referred to in the theorem, the arms are divided into $M$ sections
(where $\frac{K}{M}=q$) and each policy is a uniform distributions supported over $M$ arms such that its support contains an arm from each section. When the policy set contains all such policies 
($(\frac{K}{M})^M$ in total),
this problem becomes equivalent to playing $M$ bandit problems simultaneously with the choice of policy at each round dictating an arm choice at each game. The distinction is that only the loss of one such arm is observed, while the player 
%suffers
aims to minimize
the average regret of the $M$ games. This type of structure
(albeit with a different type of feedback)
is commonly used to prove lower bounds for combinatorial bandits, see \cite{comb_audibert} for example. 
An adaptation of the proof of Theorem 5 in \cite{comb_audibert} to our case (see Appendix \ref{appendix-multi-task-proof}) leads
to a lower bound of $\Omega(\sqrt{K T})$, from which the theorem follows by using that $\mathcal{S}^*(\Theta) = \mathcal{S}(\Theta)=\frac{K}{M}$ and that $N=(\frac{K}{M})^M$. For what concerns the bound of Theorem~\ref{cvxhull-thm}, we have that
%$\tau^*$ is the uniform distribution (with full support) and 
$D^*(\Theta)=\log(\frac{K}{M})$. Thus, Theorems \ref{upper:alt-losses:thm} and \ref{cvxhull-thm} provide the same bound since $\mathcal{S}^*(\Theta) \log N = K D^*(\Theta)$. 

\section{Conclusion}
We analyzed two algorithms providing regret bounds that depend on information-theoretic quantities describing the policy set. We proved lower bounds for certain classes of policy sets highlighting instances where our regret bounds are nearly matched. Nevertheless, it remains to be seen if better guarantees can be proved in cases like Example \ref{example-stripes} where $\mathcal{S}^*(\Theta)\geq1$
even if $\mathcal{TV}(\Theta)$ can be smaller. It is also interesting to see what are the optimal rates for cases like Example~\ref{example-epsilon} where the bound of Theorem~\ref{upper:alt-losses:thm} is suboptimal, as we learned in Section~\ref{sec:osmd}, even if $\mathcal{S}^*(\Theta)$ is of the same order as $\mathcal{TV}(\Theta)$. Another direction is 
%to see if it is possible to adapt
to investigate the possibility of adapting
to the policy set structure if the distributions are not known beforehand.

%\section*{Acknowledgment}

\bibliographystyle{IEEEtran}
\bibliography{IEEE}

\begin{appendices}
\section{Proof of Theorem \ref{thm:eps-uniform}} \label{appendix-eps-uni-proof}
%\begin{proof}
    We will consider $K$ environments $\{\mu_{\theta}\}_{\theta \in \Theta}$ such that for environment $\mu_{\theta}$ and arm $j$, $\mu_{\theta}(j) = \frac{1}{2} - \Delta\mathbb{I}\{j=a_\theta\}$, where $0 \leq \Delta < \frac{1}{2}$ is to be tuned later. Additionally let $\mu_0$ be an environment such that $\mu_0(j) = \frac{1}{2}$ for any arm $j$. Notice that $\theta$ is the optimal policy in $\mu_{\theta}$. Moreover, for $\theta' \in \Theta \backslash \{\theta\}$, we have that
    \begin{equation*}
        \sum_{j=1}^K (\theta'(j) - \theta(j)) \mu(j) = \Delta  (\theta(a_\theta) - \theta'(a_\theta)) = \Delta \epsilon.
    \end{equation*}
    And thus, 
    \begin{align*}
        R_T(\mu_\theta) &= \Delta \epsilon (T - N_{\mu_\theta}(\theta; T)) \\
        &\geq \Delta \epsilon \left(T - N_{\mu_0}(\theta; T) - T \sqrt{\frac{1}{2}D(P_{\mu_0},P_{\mu_\theta})}\right),
    \end{align*}
    where the inequality follows by using that $N_{\mu_\theta}(\theta; T) - N_{\mu_0}(\theta; T) \leq T \tv(P_{\mu_0},P_{\mu_\theta})$ followed by an application of Pinsker's inequality. Starting from Lemma \ref{low:kl-decomp}, we have that 
        \begin{align*}
         &D(P_{\mu_0}, P_{\mu_\theta}) \\
         &= \sum_{\theta' \in \Theta} N_{\mu_0}(\theta'; T) \sum_{j=1}^K \theta'(j) d(\mu_0(j),\mu_\theta(j)) \\
         &=\sum_{\theta' \in \Theta} N_{\mu_0}(\theta'; T) \theta'(a_\theta) d\left(\frac{1}{2},\frac{1}{2}-\Delta\right) \\
         &\leq c \Delta^2 \left(T \frac{1-\epsilon}{K} +   N_{\mu_0}(\theta; T) \epsilon\right),
    \end{align*}
    where for the inequality we used that $d(1/2,1/2-\Delta) = -1/2 \log(1-4\Delta^2) \leq c \Delta^2$ for $\Delta \leq \frac{1}{4}$ and $c = 8\log{\frac{4}{3}}$. Hence,
    \begin{align*}
        \sup_{\mu} R_T(\mu) 
        &\geq \frac{1}{K}\sum_{\theta \in \Theta} R_T(\mu_\theta)\\
        &\geq \Delta \epsilon \left(T - \frac{T}{K} - T \sqrt{\frac{1}{2}c \Delta^2 \left(T \frac{1-\epsilon}{K} +   \frac{T}{K} \epsilon\right)}\right)\\
        &\geq \Delta \epsilon T \left(\frac{1}{2} - \Delta \sqrt{\frac{1}{2} \frac{T}{K} c}\right),
    \end{align*}
    where the second inequality holds by the concavity of the square root, and the third since $K\geq2$. The theorem then follows by setting $\Delta = \frac{1}{4} \sqrt{\frac{2K}{cT}}$ and verifying that the stated condition on $T$ ensures that $\Delta \leq \frac{1}{4}$.
%\end{proof}

\section{Proof of Theorem \ref{thm:H2}} \label{appendix-two-policies-proof}
%\begin{proof}

We will consider two environments $\mu_1$ and $\mu_2$. For environment $\mu_1$ and arm $j$, we choose\footnote{Note that $\sqrt{\theta_1(j)} + \sqrt{\theta_2(j)}$ is always positive by our assumption that each arm is in the support of at least one policy.} 
\[ \mu_1(j) = \frac{1}{2} - \Delta \frac{\sqrt{\theta_1(j)} - \sqrt{\theta_2(j)}}{\sqrt{\theta_1(j)} + \sqrt{\theta_2(j)}},
\]
where $0 \leq \Delta < \frac{1}{2}$ is to be tuned later. Environment $\mu_2$ is defined analogously. Additionally let $\mu_0$ be an environment such that $\mu_0(j) = \frac{1}{2}$ for any arm $j$. Note that $\theta_1$ ($\theta_2$) is the optimal policy in $\mu_1$ ($\mu_2$). Indeed, 
\begin{align*}
    &\sum_{j=1}^K (\theta_2(j) - \theta_1(j)) \mu_1(j) \\
&\quad= \Delta \sum_{j=1}^K  (\theta_1(j) - \theta_2(j)) \frac{\sqrt{\theta_1(j)} - \sqrt{\theta_2(j)}}{\sqrt{\theta_1(j)} + \sqrt{\theta_2(j)}} \\
&\quad= \Delta \sum_{j=1}^K  (\sqrt{\theta_1(j)} - \sqrt{\theta_2(j)})^2 = 2 \Delta H^2(\theta_1,\theta_2).
\end{align*}
Therefore,  
    \begin{align*}
        &R_T(\mu_1)\\
        &\quad= 2 \Delta H^2(\theta_1,\theta_2) (T - N_{\mu_1}(\theta; T)) \\
        &\quad\geq 2 \Delta H^2(\theta_1,\theta_2) \left(T - N_{\mu_0}(\theta; T) - T \sqrt{\frac{1}{2}D(P_{\mu_0},P_{\mu_1})}\right),
    \end{align*}
    where the inequality follows by using that $N_{\mu_1}(\theta; T) - N_{\mu_0}(\theta; T) \leq T \tv(P_{\mu_0},P_{\mu_1})$ followed by an application of Pinsker's inequality. Note that for $\Delta \leq 1/4$ and $c=8\log(4/3)$, we have that 
    \begin{align*}
    &d(\mu_0(j),\mu_1(j)) \\
    &\quad= d\left(\frac{1}{2},\frac{1}{2}-\Delta \frac{\sqrt{\theta_1(j)} - \sqrt{\theta_2(j)}}{\sqrt{\theta_1(j)} + \sqrt{\theta_2(j)}} \right)\\
    &\quad= -\frac{1}{2} \log\left(1-4\Delta^2\left(\frac{\sqrt{\theta_1(j)}- \sqrt{\theta_2(j)}}{\sqrt{\theta_1(j)} + \sqrt{\theta_2(j)}}\right)^2\right)\\
    &\quad\leq c \Delta^2 \left(\frac{\sqrt{\theta_1(j)}- \sqrt{\theta_2(j)}}{\sqrt{\theta_1(j)} + \sqrt{\theta_2(j)}}\right)^2.
    \end{align*}
    While on the other hand
    \begin{align*}
        &\sum_{j=1}^K \theta_1(j) \left(\frac{\sqrt{\theta_1(j)}- \sqrt{\theta_2(j)}}{\sqrt{\theta_1(j)} + \sqrt{\theta_2(j)}}\right)^2 \\
        &\quad= \sum_{j=1}^K  (\sqrt{\theta_1(j)}- \sqrt{\theta_2(j)})^2 \frac{\theta_1(j)}{(\sqrt{\theta_1(j)} + \sqrt{\theta_2(j)})^2}\\
        &\quad\leq \sum_{j=1}^K  (\sqrt{\theta_1(j)}- \sqrt{\theta_2(j)})^2\\
        &\quad= 2H^2(\theta_1,\theta_2).
    \end{align*}
    With the analogous inequalities for $\mu_2$ and $\theta_2$ we get that 
    \begin{align*}
         D(P_{\mu_0}, P_{\mu_1}) 
         &= 2 c \Delta^2 H^2(\theta_1,\theta_2) (N_{\mu_1}(\theta'; T)+N_{\mu_2}(\theta'; T)) \\
         &= 2 c \Delta^2 H^2(\theta_1,\theta_2)T.
    \end{align*}
    Putting everything together, we get that
        \begin{align*}
        &\sup_{\mu} R_T(\mu)\\ 
        &\quad\geq \frac{1}{2}( R_T(\mu_1)+R_T(\mu_2))\\
        &\quad\geq2 \Delta H^2(\theta_1,\theta_2) \left(T - \frac{T}{2} - T \sqrt{c \Delta^2 H^2(\theta_1,\theta_2)T}\right)\\
        &\quad= \Delta H^2(\theta_1,\theta_2) T \left(1 - 2\Delta \sqrt{c H^2(\theta_1,\theta_2)T}\right),
    \end{align*}
    The theorem then follows by setting $\Delta = \frac{1}{4\sqrt{c H^2(\theta_1,\theta_2)T}}$ and verifying that the stated condition on $T$ ensures that $\Delta$ is less than $1/4$.
%\end{proof}

\section{Multi-Task Structure Lower Bound}  \label{appendix-multi-task-proof} 
In this section, we prove a lower bound of $\Omega(\sqrt{KT})$ for the multi-task structure described in Section \ref{sec-multi-task}. To reiterate, we have that $K = q M$, where $M \geq 1$ is the number of sections each representing a bandit game, and $q \geq 2$ is the number of arms in each section. We will index the arms according to the section they belong to and their order therein. In other words, $\mathcal{A} = \{a_{i,j}: i\in[M], j\in[q]\}$. The structure of the policy space can be described as follows:
\begin{equation*}
    \Theta = \left\{ \theta \in \Delta^M_K: \forall i \in [M], \sum_{j=1}^q \theta(a_{i,j}) = \frac{1}{M} \right\},
\end{equation*}
where $\Delta^M_K$ is the set of uniform distributions (over $K$ arms) that are supported on only $M$ arms. We will overload the notation and denote by $a_{i,\theta}$ (which belongs to $\{a_{i,j}\}_{j=1}^q$) the arm that is played by policy $\theta$ in section $i$ (i.e. we have that $\theta(a_{i,\theta}) = \frac{1}{M}$).

\begin{theorem} \label{low:multi:thm}
Suppose that the policy and arm spaces conform to the multi-task structure. 
Then for any algorithm and $T \geq \frac{K}{4\log(4/3)}$, there exists a sequence of losses such that
\begin{equation*}
    R_T \geq \frac{1}{18} \sqrt{K T}.
\end{equation*}
\end{theorem}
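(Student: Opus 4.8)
The plan is to adapt the averaging (change-of-measure) scheme used in the previous proofs, but with a \emph{per-section} comparison so that the limited feedback—only one arm observed per round, landing in each of the $M$ sections with probability exactly $1/M$—is correctly reflected in the divergences. I would index a family of Bernoulli environments by $\chi = (\chi_1,\dots,\chi_M)\in[q]^M$, setting $\mu_\chi(a_{i,j}) = \tfrac12 - \Delta\,\mathbb{I}\{j=\chi_i\}$, so that section $i$ has a single ``good'' arm $a_{i,\chi_i}$ with reduced mean. The optimal policy in $\mu_\chi$ is the (admissible) uniform policy selecting the good arm in every section, and a direct computation gives $R_T(\mu_\chi) = \frac{\Delta}{M}\sum_{i=1}^M \mathbb{E}_{\mu_\chi}[W_i]$, where $W_i = \sum_t \mathbb{I}\{a_{i,\theta_t}\neq a_{i,\chi_i}\}$ counts the rounds in which the chosen policy misses the good arm of section $i$.

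Next, for each section $i$ I would introduce the reference environment $\mu_\chi^{-i}$ obtained from $\mu_\chi$ by nullifying section $i$ (resetting all $q$ means there to $\tfrac12$) while leaving the other sections unchanged. Since $\mu_\chi^{-i}$ and $\mu_\chi$ differ only in the single arm $a_{i,\chi_i}$, Lemma~\ref{low:kl-decomp} yields $D(P_{\mu_\chi^{-i}},P_{\mu_\chi}) = \frac{d_\Delta}{M}\, N^{(i)}_{\mu_\chi^{-i}}(a_{i,\chi_i};T)$, where $d_\Delta = d(\tfrac12,\tfrac12-\Delta)$ and $N^{(i)}(\cdot;T)$ counts how often the policy plays a given arm in section $i$. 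This is precisely where the crucial factor $1/M$ enters, being the probability of actually observing section $i$. Combining the transportation inequality $N^{(i)}_{\mu_\chi}(a_{i,\chi_i};T) \le N^{(i)}_{\mu_\chi^{-i}}(a_{i,\chi_i};T) + T\,\tv(P_{\mu_\chi^{-i}},P_{\mu_\chi})$ with Pinsker's inequality lower bounds $\mathbb{E}_{\mu_\chi}[W_i]$ in terms of this section-specific divergence.

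I would then average $R_T(\mu_\chi)$ uniformly over all $\chi\in[q]^M$. Two facts make this clean: $\mu_\chi^{-i}$ does not depend on $\chi_i$, and $\sum_{j=1}^q N^{(i)}(a_{i,j};T) = T$ because the policy always selects exactly one arm per section; hence averaging over $\chi_i$ replaces every $N^{(i)}(a_{i,\chi_i};T)$ by $T/q$. This produces an average reference count $T/q$ and an average divergence $\frac{d_\Delta T}{Mq}$. Applying Jensen's inequality to the square-root term and using $q\ge2$, I would reach a bound of the form $\Delta T\big(\tfrac12 - \Delta\sqrt{cT/(2Mq)}\big)$ after invoking $d_\Delta \le c\Delta^2$ with $c = 8\log\tfrac43$. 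Tuning $\Delta = \tfrac14\sqrt{2K/(cT)}$ (recall $K = qM$) makes the parenthesis at least $\tfrac14$ and gives $\sup_\mu R_T(\mu)\ge \frac{1}{16}\sqrt{2KT/c}\ge\frac{1}{18}\sqrt{KT}$, with the hypothesis $T\ge K/(4\log\tfrac43)$ being exactly what guarantees $\Delta\le\tfrac14$.

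The main obstacle, and the one genuine departure from the earlier proofs, is the change-of-measure step: comparing $\mu_\chi$ against the global null environment $\mu_0$ (as in Theorems~\ref{low:flower:thm} and \ref{thm:eps-uniform}) would charge the full cross-section divergence and only yield a $\sqrt{qT}$ bound. The $\sqrt{K}=\sqrt{qM}$ rate requires isolating one section at a time, so that each divergence carries the $1/M$ observation probability; ensuring that the bookkeeping of the $M$ section-wise comparisons and their average combine correctly is the delicate part. Finally, passing from the lower bound on $\sup_\mu R_T(\mu)$ over Bernoulli environments to the existence of a fixed loss sequence is handled exactly as in the preceding proofs.
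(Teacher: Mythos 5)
Your proposal is correct and follows essentially the same route as the paper's proof: the same per-section reference environments $\mu_\chi^{-i}$ (the paper's $\mu_\theta^{-i}$), the same section-wise transportation-plus-Pinsker step with the crucial $1/M$ factor from Lemma~\ref{low:kl-decomp}, the same averaging over all $q^M$ environments (your ``average over $\chi_i$'' observation is exactly the paper's equivalence classes $\Theta/\sim_i$, and your Jensen step is the paper's Cauchy--Schwarz), and the identical tuning $\Delta = \frac{1}{4}\sqrt{2K/(cT)}$ with the check $\frac{1}{16}\sqrt{2/c}\geq\frac{1}{18}$.
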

\begin{proof} 
    We construct an environment $\mu_\theta$ for each policy $\theta$ such that for $a \in \mathcal{A}$, $\mu_{\theta}(a) = \frac{1}{2} - \Delta\mathbb{I}\{a \in U(\theta)\}$, where $U(\theta)$ is the support of $\theta$ and $0 < \Delta < \frac{1}{2}$ is to be tuned later. Moreover, we will also use the following variations of each environment. For $i \in [M]$, let $\mu_\theta^{-i}$ be an environment such that for $a \in \mathcal{A}$, 
    \[ \mu_\theta^{-i}(a) = 
    \begin{cases}
    \frac{1}{2} , & \text{if } a \in \{a_{i,j}\}_{j=1}^q\\
    \mu_{\theta}(a), & \text{otherwise.}
    \end{cases}\]

    For a policy $\theta$, we have that
    \begin{align*}
         R_T(\mu_\theta) &= \mathbb{E}_{\mu_\theta}  \sum_{t=1}^T \sum_{a\in \mathcal{A}} (\theta_t(a) - \theta(a)) \mu_\theta(a) \\ 
         &= \Delta \E\nolimits_{\mu_\theta}\sum_{t=1}^T \sum_{i=1}^M (\theta(a_{i,\theta}) - \theta_t(a_{i,\theta}))  \\
         &= \frac{\Delta}{M} \E\nolimits_{\mu_\theta}\sum_{t=1}^T \sum_{i=1}^M (1 - \mathbb{I}\{a_{i,\theta_t} = a_{i,\theta}\}) \\
         &= \frac{\Delta}{M} \sum_{i=1}^M  (T - N_{\mu_\theta}(i,\theta;T)\}),
    \end{align*}
    where for an environment $\mu$, a policy $\theta$, and a section $i \in [M]$, $N_{\mu}(i,\theta; T) \coloneqq \E_{\mu}\left[\sum_{t=1}^T  \mathbb{I}\{a_{i,\theta_t} = a_{i,\theta}\} \right]$. In words, this counts the expected number of times (under $\mu$) that the chosen policy agrees with $\theta$ in section $i$. Next, we use that, for any $i \in [M]$, $N_{\mu_\theta}(i,\theta;T) - N_{\mu^{-i}_\theta}(i,\theta;T) \leq T \tv(P_{\mu^{-i}_\theta},P_{\mu_\theta})$ together with Pinsker's inequality
    to get that
    \begin{equation} \label{low:multi:reg-kl}
        R_T(\mu_\theta) \geq \frac{\Delta}{M} \sum_{i=1}^M  \bigg(T - N_{\mu^{-i}_\theta}(i,\theta;T) - T\sqrt{\frac{1}{2}D(P_{\mu^{-i}_\theta},P_{\mu_\theta})}\bigg).
    \end{equation}
    As for the KL-divergence term, we apply Lemma \ref{low:kl-decomp}:
    \begin{align*}
         &D(P_{\mu^{-i}_\theta},P_{\mu_\theta})\\
         &\quad= \sum_{\theta' \in \Theta} N_{\mu^{-i}_\theta}(\theta';T) \sum_{a\in\mathcal{A}} \theta'(a) d(\mu^{-i}_\theta(a),\mu_\theta(a)) \\
         &\quad= \sum_{\theta' \in \Theta} N_{\mu^{-i}_\theta}(\theta';T)  \theta'(a_{i,\theta}) d(\mu^{-i}_\theta(a_{i,\theta}),\mu_\theta(a_{i,\theta})) \\
         &\quad= \frac{1}{M}\sum_{\theta' \in \Theta} N_{\mu^{-i}_\theta}(\theta';T)  \mathbb{I}\{a_{i,\theta'} = a_{i,\theta}\} d\left(\frac{1}{2},\frac{1}{2}-\Delta\right) \\
         &\quad\leq \frac{c \Delta^2}{M}\sum_{\theta' \in \Theta} N_{\mu^{-i}_\theta}(\theta';T)  \mathbb{I}\{a_{i,\theta'} = a_{i,\theta}\} \\
         &\quad= \frac{c \Delta^2}{M} \sum_{\theta' \in \Theta: a_{i,\theta'} = a_{i,\theta}} N_{\mu^{-i}_\theta}(\theta';T) \\
         &\quad= \frac{c \Delta^2}{M}   \E\nolimits_{\mu^{-i}_\theta}\sum_{t=1}^T \sum_{\theta' \in \Theta: a_{i,\theta'} = a_{i,\theta}} \mathbb{I}\{\theta_t = \theta'\} \\
         &\quad= \frac{c \Delta^2}{M}   \E\nolimits_{\mu^{-i}_\theta}\sum_{t=1}^T  \mathbb{I}\{a_{i,\theta_t} = a_{i,\theta}\}\\
         &\quad= \frac{c \Delta^2}{M} N_{\mu^{-i}_\theta}(i,\theta;T),  
    \end{align*}
    where the second equality holds since $a_{i,\theta}$ is the only arm that does not have the same mean loss in the two environments, and the inequality holds for $\Delta \leq \frac{1}{4}$ and $c = 8\log{\frac{4}{3}}$. Plugging back into \eqref{low:multi:reg-kl}, we get that
    \begin{align} 
        \nonumber &R_T(\mu_\theta) \\ \label{low:multi:reg-single}
        &\quad\geq \frac{\Delta}{M} \sum_{i=1}^M  \bigg(T - N_{\mu^{-i}_\theta}(i,\theta;T) - T\Delta\sqrt{\frac{c}{2 M} N_{\mu^{-i}_\theta}(i,\theta;T)}\bigg).
    \end{align}
    For what follows, we introduce an extra bit of notation. For $i \in [M]$ and $\theta \in \Theta$, define $F(i,\theta;T) = N_{\mu^{-i}_\theta}(i,\theta;T) + T\Delta\sqrt{\frac{c}{2 M} N_{\mu^{-i}_\theta}(i,\theta;T)}$. Moreover, let $\sim_i$ denote an equivalence relation on the policy set such that for $\theta$, $\theta' \in \Theta$,
    \begin{equation*}
        \theta \sim_i \theta' \iff \forall s \in [M]\backslash\{i\}, a_{s, \theta} = a_{s, \theta'}.
    \end{equation*}
    In words, two policies are equivalent according to $\sim_i$ if they agree everywhere outside section $i$. Denote the set of all equivalence classes of $\sim_i$ by $\Theta / \sim_i$, which contains $q^{M-1}$ classes, each containing $q$ policies corresponding to the possible arm choices in section $i$. Notice that if $\theta \sim_i \theta'$ then $\mu_{\theta}^{-i}$ is the same as $\mu_{\theta'}^{-i}$, and we will thus refer to either of the two environments using the equivalence class to which the two policies belong: $\mu_{[\theta]}^{-i}$.
    Now, with $i$ still referring to a fixed section, we have that
    \begin{align*}
        \sum_{\theta \in \Theta} N_{\mu^{-i}_\theta}(i,\theta;T) &= \sum_{W \in \Theta/\sim_i}
        \sum_{\theta \in W} N_{\mu^{-i}_\theta}(i,\theta;T) \\
        &= \sum_{W \in \Theta/\sim_i}
        \sum_{\theta \in W} N_{\mu^{-i}_W}(i,\theta;T) \\
        &= \sum_{W \in \Theta/\sim_i}
         \E\nolimits_{\mu^{-i}_W}\sum_{t=1}^T \underbrace{\sum_{\theta \in W} \mathbb{I}\{a_{i,\theta^t} = a_{i,\theta}\}}_{=1} \\
         &= q^{M-1} T = q^{M} \frac{T}{q}.
    \end{align*}
    On the other hand,
    \begin{align*}
        \sum_{\theta \in \Theta} \sqrt{  N_{\mu^{-i}_\theta}(i,\theta;T)} &\leq \sqrt{\sum_{\theta \in \Theta} 1^2} \sqrt{\sum_{\theta \in \Theta}  N_{\mu^{-i}_\theta}(i,\theta;T)}  \\
        &= \sqrt{q^M} \sqrt{q^{M} \frac{T}{q}} = q^{M} \sqrt{\frac{T}{q}}.
    \end{align*}
    Thus, we have that
    \begin{align*}
        \sum_{\theta \in \Theta} F(i,\theta;T) \leq q^{M} T\bigg( \frac{1}{q} + \Delta\sqrt{\frac{c T}{2 q M} } \bigg).
    \end{align*}
    Hence, 
    \begin{align*} 
        \sup_\mu R_T(\mu) 
        &\geq \frac{1}{|\Theta|} \sum_{\theta \in \Theta}  R_T(\mu_\theta) \\
        &\geq \frac{1}{|\Theta|} \sum_{\theta \in \Theta}  \frac{\Delta}{M} \sum_{i=1}^M  (T - F(i,\theta;T)) \\
        &\geq \frac{\Delta}{M} \sum_{i=1}^M   \bigg (T - \frac{1}{|\Theta|}  q^{M} T\bigg( \frac{1}{q} + \Delta\sqrt{\frac{c T}{2 q M} } \bigg)\bigg) \\
        &= \Delta T   \bigg (1 -   \frac{1}{q} - \Delta\sqrt{\frac{c T}{2 K} } \bigg) \\
        &\stackrel{q\geq2}{\geq} \Delta T\left(\frac{1}{2} - \Delta\sqrt{\frac{cT}{2K}} \right). 
    \end{align*}
    Plugging $\Delta = \frac{1}{4} \sqrt{\frac{2K}{cT}}$ into the previous display proves the theorem after observing that $\frac{1}{16}\sqrt{\frac{2}{c}} \geq \frac{1}{18}$. Lastly, notice that the condition imposed on $T$ ensures that indeed $\Delta \leq \frac{1}{4}$.
\end{proof}

\end{appendices}

\end{document}